\documentclass[11pt, letterpaper]{cmu}

\usepackage[all]{hypcap}
\usepackage[comma,numbers,sort,compress]{natbib}
\usepackage{hyperref}[citecolor=magenta]

\hypersetup{
    colorlinks = true,
    citecolor = {magenta},
}

\usepackage{microtype}
\usepackage{graphicx}
\usepackage{subfigure}
\usepackage{booktabs}
\usepackage{float}
\usepackage{amsmath}
\usepackage{amssymb}
\usepackage{mathtools}
\usepackage{amsthm}
\usepackage{mathrsfs}
\usepackage{nicefrac}
\usepackage{dsfont}
\usepackage{enumitem}
\usepackage{float}
\usepackage[utf8]{inputenc}

\usepackage{threeparttable}
\usepackage{booktabs}

\definecolor{myblue}{rgb}{0.1,0.4,0.9}

  {\end{tcolorbox}}
\usepackage[capitalize,noabbrev]{cleveref}
\usepackage{subcaption}
\usepackage{wrapfig}
\usepackage{lipsum}
\usepackage{listings}

\usepackage{amsmath}
\usepackage{amssymb}
\usepackage{mathtools}
\usepackage{amsthm}
\usepackage{bbm}
\usepackage{fleqn, tabularx}
\usepackage{multirow}
\usepackage[export]{adjustbox}
\usepackage{colortbl}

\theoremstyle{plain}
\newtheorem{theorem}{Theorem}[section]

\newtheorem{lemma}[theorem]{Lemma}

\theoremstyle{definition}
\newtheorem{definition}[theorem]{Definition}
\newtheorem{assumption}[theorem]{Assumption}
\theoremstyle{remark}

\usepackage{algpseudocode}
\usepackage{setspace}

\usepackage{color}
\definecolor{deepblue}{rgb}{0,0,0.5}
\definecolor{deepred}{rgb}{0.6,0,0}
\definecolor{deepgreen}{rgb}{0,0.5,0}
\definecolor{boost_correct_to_correct}{HTML}{66C2A5}
\definecolor{default_correct_to_correct}{HTML}{fc8d62}
\definecolor{dup_correct_to_correct}{HTML}{8da0cb}
\definecolor{new_correct_to_correct}{HTML}{e78ac3}

\newcommand\pythonstyle{\lstset{
basicstyle=\ttfamily\footnotesize,
language=Python,
morekeywords={self, clip, exp, mse_loss, uniform_sample, concatenate, logsumexp},              
keywordstyle=\color{deepblue},
emph={MyClass,__init__},          
emphstyle=\color{deepred},   
stringstyle=\color{deepgreen},
frame=single,                       
showstringspaces=false
}}

\lstnewenvironment{python}[1][]
{
\pythonstyle
\lstset{#1}
}
{}

\newcommand\pythoninline[1]{{\pythonstyle\lstinline!#1!}}

\usepackage{amsmath,amsfonts,bm}

\def\eqref#1{Eq.~\ref{#1}}
\def\1{\bm{1}}

\def\rr{{\textnormal{r}}}

\DeclareMathAlphabet{\mathsfit}{\encodingdefault}{\sfdefault}{m}{sl}
\SetMathAlphabet{\mathsfit}{bold}{\encodingdefault}{\sfdefault}{bx}{n}

\newcommand{\E}{\mathbb{E}}

\newcommand{\Var}{\mathrm{Var}}

\newcommand{\methodname}{\texttt{floq}}

\newcommand{\bz}{\mathbf{z}}

\newcommand{\bs}{\mathbf{s}}
\newcommand{\ba}{\mathbf{a}}

\newcommand{\bx}{\mathbf{x}}

\makeatletter
\def\mathcolor#1#{\@mathcolor{#1}}
\def\@mathcolor#1#2#3{%
  \protect\leavevmode
  \begingroup
    \color#1{#2}#3%
  \endgroup
}
\makeatother

\usepackage[textsize=tiny]{todonotes}
\usepackage{algorithm}
\usepackage{amssymb}

\usepackage[skins,theorems]{tcolorbox}

\usepackage{crossreftools}
\usepackage[suppress]{color-edits}

\usepackage{dsfont}
\usepackage{nicefrac}

\newtcolorbox{analysisbox}[1][]{
    enhanced jigsaw,
    colback=white,
    colframe=blue!75!black,
    fonttitle=\bfseries,
    boxsep=5pt,
    left=5pt,
    right=5pt,
    top=5pt,
    bottom=5pt,
    title=#1,
}

\tcbset{
  aibox/.style={
    width=\linewidth,
    top=8pt,
    bottom=4pt,
    colback=rliableolive!8!white,
    colframe=black,
    colbacktitle=black,
    enhanced,
    center,
    attach boxed title to top left={yshift=-0.1in,xshift=0.15in},
    boxed title style={boxrule=0pt,colframe=white,},
  }
}
\newtcolorbox{AIbox}[2][]{aibox,title=#2,#1}
\definecolor{lightblue}{rgb}{0.22,0.45,0.70}% light blue

\usepackage{algorithm}      % float wrapper for algorithms
\usepackage{algpseudocode}  % pseudocode commands (\For, \Function, etc.)
\usepackage[endLComment=,italicComments=false]{algpseudocodex} % nice extras
\usepackage{tcolorbox}      % boxes
\usepackage{xcolor}
\usepackage{pifont}
\usepackage{soul}
\definecolor{highlightmistake}{RGB}{255, 179, 179}
\definecolor{highlightcorrect}{RGB}{179, 255, 179} 

\definecolor{rliableolive}{HTML}{BBCC33}
\definecolor{rliableblue}{HTML}{77AADD}
\definecolor{rliablered}{HTML}{EE8866}

\definecolor{myblue}{rgb}{0.1,0.4,0.9}
\newcommand{\shadebyYR}[3]{%
  \begingroup
  \pgfmathsetmacro{\den}{max(#2-#3, 1e-6)}%
  \pgfmathsetmacro{\t}{max(0, min(1, (#2-#1)/\den))}%
  \pgfmathsetmacro{\rr}{1}%
  \pgfmathsetmacro{\gg}{1 - 0.25*\t}%
  \pgfmathsetmacro{\bb}{1 - \t}%
  \definecolor{tmpShade}{rgb}{\rr,\gg,\bb}%
  \setlength{\fboxsep}{1pt}%
  \colorbox{tmpShade}{$#1$}%
  \endgroup
}

\title{What Does Flow Matching Bring To TD Learning?}

\author[1]{Bhavya Agrawalla}
\author[2]{Michal Nauman}
\author[1]{Aviral Kumar}

\affil[1]{Carnegie Mellon University}
\affil[2]{University of Warsaw}

\correspondingauthor{bbagrawa@andrew.cmu.edu}

\begin{abstract}
\textbf{Abstract:} Recent work shows that flow matching can be effective for scalar Q-value function estimation in reinforcement learning (RL), but it remains unclear why or how this approach differs from standard critics. Contrary to conventional belief, we show that their success is not explained by distributional RL, as explicitly modeling return distributions can reduce performance. Instead, we argue that the use of integration for reading out values and dense velocity supervision at each step of this integration process for training improves TD learning via two mechanisms. First, it enables robust value prediction through \emph{test-time recovery}, whereby iterative computation through integration dampens errors in early value estimates as more integration steps are performed. This recovery mechanism is absent in monolithic critics. Second, supervising the velocity field at multiple interpolant values induces more \emph{plastic} feature learning within the network, allowing critics to represent non-stationary TD targets without discarding previously learned features or overfitting to individual TD targets encountered during training. 
We formalize these effects and validate them empirically, showing that flow-matching critics substantially outperform monolithic critics (2$\times$ in final performance and around 5$\times$ in sample efficiency) in settings where loss of plasticity poses a challenge e.g., in high-UTD online RL problems, while remaining stable during learning.
\\\\
\textbf{Code and runs can be found at}: \href{https://github.com/CMU-AIRe/floq/tree/main/what_does_flow_matching}{https://github.com/CMU-AIRe/floq/tree/main/what-does-flow-matching.}
\end{abstract}

\begin{document}

\maketitle

\vspace{-0.3cm}
\section{Introduction}
\label{sec:introduction}
\vspace{-0.2cm}

\begin{wrapfigure}{r}{0.51\textwidth}
\vspace{-0.8cm}
\centering
\includegraphics[width=0.99\linewidth]{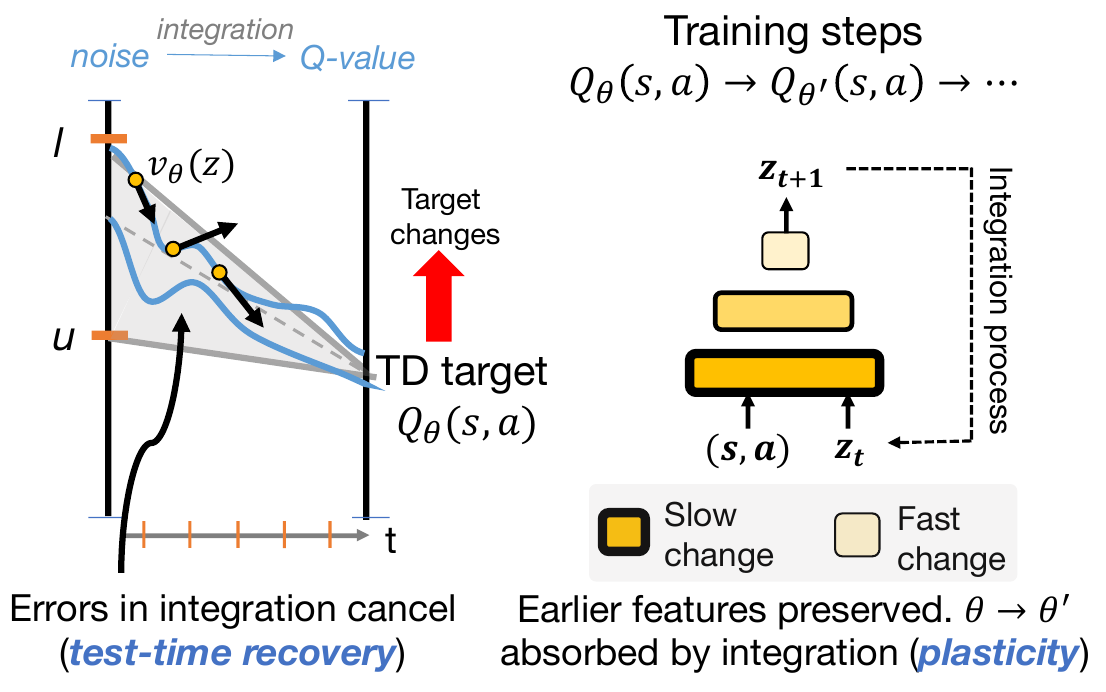}
\vspace{-0.65cm}
\caption{\footnotesize \emph{\textbf{Flow-matching critics with non-stationary TD targets.}} Values are computed by integrating a learned velocity field over multiple steps. This iterative process enables \textcolor{lightblue}{\textbf{test-time recovery}} (left), where errors in early integration steps are dampened by later steps. 
When TD targets change ($\theta \rightarrow \theta'$), the integration dynamics can absorb part of the shift, allowing earlier features to remain largely unchanged, improving \textcolor{lightblue}{\textbf{plasticity}}.}
\vspace{-0.6cm}
\label{fig:teaser}
\end{wrapfigure}
Recent works show that utilizing flow-matching can be highly effective for value  estimation in off-policy reinforcement learning (RL)~\citep{agrawalla2025floqtrainingcriticsflowmatching,espinosa2025expressive,dong2025valueflows,zhong2025flowcritic,chen2025unleashing}. These flow-matching critics depart from standard ``monolithic'' architectures that map state-action pairs to scalar Q-values in a single forward pass, by instead estimating values via the iterative integration of a learned velocity field given a noise input. This approach yields substantial empirical gains, and is more robust than monolithic networks in offline~\citep{levine2020offline} and offline-to-online RL settings~\citep{nakamoto2023calql}. However, while the performance gap is clear, the mechanism driving it is unexplained. Do flow-matching critics succeed because they model value distributions, or because they introduce an inductive bias that mitigates some pathologies of TD learning?

A natural hypothesis is that flow-matching critics succeed because they implicitly model return distributions, similar to distributional RL~\citep{bellemare2017distributional}. Indeed, much recent work applying flow matching to value learning~\citep{espinosa2025expressive,dong2025valueflows,zhong2025flowcritic,chen2025unleashing} explicitly adopts a distributional RL perspective and derives objectives accordingly. We design controlled experiments to study the role of distributional RL~\citep{bellemare2017distributional,dabney2017distributional}  and find that it does not explain the observed gains: explicitly incorporating distributional updates often degrades performance relative to simple ``expected-value'' backups, which do not learn the return distribution. Flow-matching critics trained with standard TD learning (i.e., expected Q-value backups) also consistently outperform strong distributional RL algorithms. These results indicate that the advantage of flow matching does not come from modeling return distributions.

\emph{\textbf{So why do flow-matching critics work?}} We argue that flow-matching critics improve TD learning because they train a velocity field with dense supervision covering an integration trajectory used for inference (Figure~\ref{fig:teaser}). This form of \emph{iterative computation} plays a dual role. First, at inference, the integration process enables the critic to refine its prediction as long as the velocity field is \emph{densely} supervised to fit the target for multiple interpolant values and through the integration trajectory. We term this phenomenon \textbf{\emph{test-time recovery}}, a phenomenon that arises because iterative computation on a densely supervised velocity field is used to estimate the Q-value. Second, beyond test-time recovery, flow-matching induces more \textbf{\emph{plastic}} features that allow better fitting subsequent TD targets. Specifically, the velocity network’s features need not change substantially in attempts to fit successive TD targets, as these shifts can instead be accommodated (at least partially) through the integration process. This mitigates pathologies associated with plasticity loss~\citep{lyle2023understanding} or representational capacity~\citep{kumar2021dr3}. Put together, flow critics outperform monolithic architectures with identical computation graphs because of no dense velocity supervision.

\textbf{Theoretically}, we formalize the notions of \emph{test-time recovery} and the preservation of \emph{plasticity} in simple settings, and show that flow matching induces weight update dynamics that preserve and reweight previously learned features rather than overwriting them under non-stationary TD targets. Such dynamics are absent in monolithic critics or their ensembles that do not use such iterative computation architectures. Achieving similar behavior in these baseline approaches instead requires auxiliary objectives or explicit regularization that bias optimization and must be carefully tuned. \textbf{Empirically}, we show that flow-matching critics tolerate substantially higher noise, are more robust to interventions such as resetting or freezing network components, and learn more isotropic features, thereby addressing several pathologies that have been reported with noise and regularization in TD targets. We also find that flow-matching critics achieve much stronger performance, resulting in a 2$\times$ performance gain and a 5$\times$ improvement in sample efficiency in high update-to-data online RL with offline data, a setting that is often plagued by loss of plasticity. Flow-matching critics also exhibit more stable learning at high UTD values. Finally, we show that these gains arise from fitting velocities rather than directly regressing to TD targets.

Our main contribution is an understanding of why flow matching improves TD learning. We show that value estimation with an iterative integration procedure, and training the velocity field with dense supervision along the integration path, fundamentally alters both inference-time behavior and representation learning. In particular, this structure enables test-time recovery and promotes more plastic feature representations. Together, these effects mitigate the core pathologies associated with TD learning.

\vspace{-0.2cm}
\section{Related Work}
\label{sec:related}
\vspace{-0.1cm}
\textbf{Flow matching in RL.}
Flow matching~\citep{flow_albergo2023,flow_lipman2023} and diffusion models~\citep{ho2020denoising,diffusion_sohl2015} are used as policies in RL~\citep{ren2024diffusion, celik2025dime, ma2025efficient, lv2025flow, wang2022diffusion, park2025flow}, motivated primarily from the lens of expressivity~\citep{diffusionpolicy_chi2023} but also more recently through iterative computation~\citep{pan2025adonoisingdispellingmyths}. Flow matching has been applied to value learning by parameterizing critic networks as velocity fields in TD learning~\citep{agrawalla2025floqtrainingcriticsflowmatching, espinosa2025expressive, dong2025valueflows,zhong2025flowcritic,chen2025unleashing}. While prior work shows strong results, it is unclear whether these gains stem from capacity, distributional modeling~\citep{espinosa2025expressive, dong2025valueflows,zhong2025flowcritic,chen2025unleashing}, or the learning dynamics induced by iterative computation~\citep{agrawalla2025floqtrainingcriticsflowmatching}. We provide evidence for the latter, showing that iterative computation from flow-matching produces more plastic features and improves value prediction.

\textbf{Stability and plasticity in TD learning.}
Prior work has identified several pathologies in TD learning, including value overestimation~\citep{hassalt10doubleq, fujimoto2018addressing}, growth in parameter norms~\citep{nikishin2022primacy, Nauman2024overestimation}, and rapid loss of plasticity~\citep{nikishin2023deep, lyle2023understanding} as training progresses. These issues are commonly attributed to bootstrapping and target non-stationarity, motivating a range of stabilization techniques, including architectural modifications such as layer normalization~\citep{ln_ba2016, ball2023efficient, nauman2024bigger}, weight or feature normalization~\citep{kumar2021dr3, hussing2024dissecting, lee2025hyperspherical}, and alternative objectives such as categorical losses~\citep{bellemare2017distributional,kumar2023offline,farebrother2024stop}. Collectively, these approaches improve representational capacity and help maintain plasticity~\citep{kumar2021implicit,kumar2021dr3,lyle2024normalization}, and are widely used in modern RL algorithms~\citep{lee2025hyperspherical, nauman2025bigger, palenicek2025xqc}. 

We show that flow-matching critics stabilize TD learning by improving plasticity and robustness by training parameters along with an integration procedure, without requiring additional forms of regularization. For instance, one of our experiments shows that freezing layers of a critic network typically cripples a standard critic in an offline RL setting (note that the closest prior results on recovering from feature resets~\citep{nikishin2022primacy} operate in an online RL setting where new data can be collected during the process of recovery). In contrast, we show that this type of intervention is far less detrimental for flow-matching critics that use an integration process and train the velocity with dense supervision along this process.

\textbf{Related work beyond TD learning.} Finally, our results relate to prior work analyzing the learning dynamics of flow-matching models outside of RL~\citep{li2025back,bertrand2025closedformflowmatchinggeneralization}. In contrast to these works, we study flow matching under non-stationary TD targets and show that this distinction leads to different conclusions and highlights the advantages of flow matching in RL. Our work is also related to \citet{pan2025adonoisingdispellingmyths}, which emphasizes the role of iterative computation in training policies via behavioral cloning on a static dataset and finds that two integration steps are sufficient in that setting. However, unlike TD learning, their setting does not involve clear non-stationarity or an explicit need for preserving plastic feature representations.

\vspace{-0.25cm}
\section{Preliminaries, Notation, and Setup}
\label{sec:prelims}
\vspace{-0.2cm}
Under usual notation for states $\bs$ and actions $\ba$, we train a policy $\pi(\ba| \bs)$ in a Markov decision process (MDP) that induces a distribution over the return  $Z^\pi(\bs, \ba) \triangleq \sum_{t=0}^\infty \gamma^t r(\bs_t, \ba_t)$. The expectation of $Z^\pi(\bs, \ba)$ is the Q-function of the policy: $Q^\pi(\bs, \ba) = \mathbb{E}[Z^\pi(\bs, \ba)]$. We focus on RL training with an offline dataset $\mathcal{D}=\{(\bs, \ba, r, \bs')\}$, in the standard offline RL~\citep{levine2020offline} setting for most analysis. We also consider the online RL training setting with offline data, where we use the offline dataset $\mathcal{D}$ to seed the replay buffer. Value-based methods learn a network $Q_\theta(\bs, \ba)$ by minimizing TD error.

\textbf{Flow-matching critics.} Flow-matching value functions depart from monolithic architectures that directly map $(\bs,\ba)$ to the value output after a few layers. Instead, they represent values via a learned transformation of a random noise $\bz \in \mathbb{R}$. Concretely, these methods parameterize a time-dependent velocity field
$v_\theta(\bz, t \mid \bs,\ba)$ that defines an ODE over $\bz$.
Starting from an initial noise sample $\bz_0 \sim p_0(\bz)$ at $t=0$, numerical integration of this ODE ($\psi(t, \bz|\bs, \ba)$) 
produces a value sample at $t=1$.
Several recent works use this parameterization to learn \emph{distributional} value functions.
In particular, \citet{espinosa2025expressive,dong2025valueflows,zhong2025flowcritic,chen2025unleashing} train the velocity field so that integrating over the initial noise recovers the full return distribution $Z^\pi(\bs,\ba)$.
Given a transition $(\bs,\ba,r,\bs') \sim \mathcal{D}$, a distributional TD target is constructed by pushing forward noise through the target flow at the next state:
{\setlength{\abovedisplayskip}{6pt}
 \setlength{\belowdisplayskip}{6pt}
\begin{align}
\label{eq:dist_target}
\!\!\!\bz' \sim p_0(\bz),~
\tilde{Z}(\bs,\ba; \bz') = r(\bs,\ba) + \gamma \, \psi_{\bar{\theta}}(1, \bz'|\bs', \ba'),\!\!~~~ (\bar{\theta}~\text{denotes target network})
\end{align}}
\!\!where $\psi_{\bar{\theta}}$ denotes the integrated target flow, $\ba' \sim \pi(\cdot|\bs')$.
Flow matching is then applied to align the velocity field with the transport from $\bz$ (distinct from $\bz'$) to samples from $\tilde{Z}(\bs,\ba; \bz')$.
An objective~\citep{espinosa2025expressive} is
{{\setlength{\abovedisplayskip}{6pt}
 \setlength{\belowdisplayskip}{6pt}
\begin{align}
\label{eq:distributional_floq}
\mathcal{L}_{\text{dist}}(\theta) := \mathbb{E}_{{(\bs,\ba,r,\bs') \sim \mathcal{D}, \bz,\bz',\, t \sim \text{Unif}(0,1)}}
\Big[
\big\|
v_\theta(\bz(t), t \mid \bs,\ba)
- \bar{s}_{\bz,\bz'}(\bs,\ba)
\big\|_2^2
\Big],
\end{align}}
\!\!where the target velocity is $\bar{s}_{\bz,\bz'}(\bs,\ba) := \tilde{Z}(\bs,\ba; \bz') - \bz$, and the interpolant is
$\bz(t) = t \cdot \bz + (1 - t) \cdot \tilde{Z}(\bs,\ba; \bz')$.

\textbf{\texttt{floq} (flow-matching critics with expected-value TD backups).}
In contrast, \citet{agrawalla2025floqtrainingcriticsflowmatching} use flow matching to represent a Q-function while targeting only the \emph{expected} return.
Although \texttt{floq} integrates noise into a Q-value ``sample'' and therefore produces stochastic outputs during inference, its TD target collapses the next-state flow to a scalar expectation. Consequently, this approach does not learn or enforce a return distribution.
Specifically, given $m$ i.i.d.\ noise samples $\{\bz'_j\}_{j=1}^m$, define
{\setlength{\abovedisplayskip}{6pt}
 \setlength{\belowdisplayskip}{6pt}
 \begin{align}
\label{eq:sclaar_target}
y(\bs,\ba)
:=
r(\bs,\ba)
+ \gamma \frac{1}{m} \sum_{j=1}^m \psi_{\bar{\theta}}(1, \bz'_j \mid \bs', \ba'),
\end{align}}
\!\!which estimates the expected-value TD target $r(\bs,\ba) + \gamma Q_{\bar{\theta}}(\bs',\ba')$.
Flow-matching is then used to regress from initial noise $\bz \sim \text{Unif}[l,u]$ to target $y(\bs,\ba)$ via:
\begin{align}
\label{eq:standard_floq}
\mathcal{L}_{\text{floq}}(\theta) := \mathbb{E}_{{(\bs,\ba,r,\bs') \sim \mathcal{D},~ \bz \sim \text{Unif}[l, u],~ t \sim \text{Unif}(0,1)}}
\Big[
\big\|
v_\theta(\bz(t), t \mid \bs,\ba)
- \big( y(\bs,\ba) - \bz \big)
\big\|_2^2
\Big].
\end{align}
where the interpolant is $\bz(t) = t\cdot\bz + (1 - t)\cdot y(\bs, \ba)$. 
Thus, unlike the distributional objective above, \texttt{floq} does not learn or enforce a distributional Bellman equation; it uses flow matching purely as a \emph{parameterization} of the expected Q-function. That is, the target used to train the velocity field is an expectation in Equation~\ref{eq:standard_floq} and can be viewed as the expectation of $s_{\bz,\bz'}(\bs,\ba)$ over $\bz'$ alone.

\citet{agrawalla2025floqtrainingcriticsflowmatching} motivates the use of flow-matching from the perspective of \emph{iterative computation}, rather than distributional RL and hence chooses to utilize expected-value backups. Rather than fitting a Q-function in a single pass, this work suggests that integration steps enable a gradual refinement of value estimates. While this explanation is appealing at a high level, it leaves open what iterative computation provides \emph{formally}. In particular, if the benefits of flow matching arise solely from iterative computation at inference time, it is unclear why monolithic architectures fail to exhibit similar behavior.

\underline{\emph{\textcolor{lightblue}{\textbf{Our goal.}}}}
Given these different design choices underlying flow-matching critics, our goal is to identify the mechanisms by which flow-matching Q-functions improve TD learning.
How does iterative integration interact with TD bootstrapping?
Why does \texttt{floq} improve performance despite targeting only expected values?
We argue that the answer lies not in distributional modeling, but in \emph{test-time recovery} and improved \emph{representational plasticity} enabled by the use of iterative computation both during training and inference in flow-based critics, which we analyze in the remainder of the paper.

\vspace{-0.2cm}
\section{Is Distributional RL Crucial for Success of Flow-Matching Critics?}
\label{sec:floq_and_dist_rl}
\vspace{-0.1cm}

\begin{wraptable}{r}{0.52\textwidth}
\vspace{-0.4cm}
\centering
\caption{
\label{table:floq_distributional_vs_expected}
\footnotesize
\emph{\textbf{Comparing expected-value (\textbf{E}) vs. distributional (\textbf{D})}} flow-matching critics on representative OGBench tasks.
Each entry reports \textbf{E / D}. While both variants learn similar expected Q-values, \textbf{D} produces higher-variance estimates (that more closely reflect variation in the return distribution) but does not outperform \textbf{E}.
}
\vspace{-0.2cm}
\resizebox{0.5\textwidth}{!}{
\begin{tabular}{l|c|cc}
\toprule
Env. & Success (\%) & $Q_\theta(\bs,\ba)$ & $\mathrm{Var}_\bz(Q)$ \\
\midrule
hmmaze-large   & \textbf{52} / 30 & $-180 / -170$ & $0.2 / \mathbf{4.5}$ \\
antmaze-giant  & \textbf{86} / 74 & $-190 / -200$ & $0.1 / \mathbf{0.7}$ \\
cube-double    & 72 / 72        & $-130 / -130$ & $1.1 / \mathbf{6.3}$ \\
hmmaze-medium  & 94 / 94        & $-170 / -170$ & $0.3 / \mathbf{2.3}$ \\
\bottomrule
\end{tabular}
}
\vspace{-0.7cm}
\end{wraptable}
We now explicitly test whether TD updates from distributional RL are necessary for strong performance of flow-matching critics.
To do so, we use the TD-update from \texttt{floq}~\citep{agrawalla2025floqtrainingcriticsflowmatching} and modify it to use a distributional backup (Equation~\ref{eq:distributional_floq}), which resembles the loss function used by \citet{espinosa2025expressive,zhong2025flowcritic}. 
\\\\
Importantly, we keep the velocity field architecture and hyperparameters (e.g., integration steps, initial noise range) identical across the two variants and only change the loss function from Equation~\ref{eq:standard_floq} to Equation~\ref{eq:distributional_floq}. We compare expected-value \texttt{floq} and its distributional counterpart on four representative OG-Bench tasks~\citep{ogbench_park2025}, along different axes: \textbf{(a)} the expected Q-value recovered on the offline dataset; \textbf{(b)} variance of the learned Q-value distribution (note that \texttt{floq} Q-functions still parameterize a distribution); and \textbf{(c)} the performance of the policy.

\textbf{Results.} Observe in Table \ref{table:floq_distributional_vs_expected}, that both expected and distributional variants recover nearly expected Q-values that are close to each other. However, the statistics of the learned Q-value distributions differ substantially. For instance, the standard deviation of the expected variant is significantly \emph{lower} than that of the distributional variant, and much lower than what one would expect for the distribution of returns on this environment. This behavior is consistent with the training objective. Since \texttt{floq} does not attempt to match the return distribution, and OG-Bench tasks are expected to exhibit high-variance, multi-modal returns~\citep{espinosa2025expressive,dong2025valueflows,zhong2025flowcritic,chen2025unleashing}, regressing to the expected target yields substantially lower-variance estimates. These results confirm that while both algorithms learn stochastic Q-value predictions, \texttt{floq} does not model the return distribution in a distributional RL sense since statistics in Table~\ref{table:floq_distributional_vs_expected} differ substantially.

Despite this, in Table \ref{table:floq_distributional_vs_expected} we see that the distributional variant offers no benefits in performance and is often worse than expected-value \texttt{floq}. In addition, \citet{agrawalla2025floqtrainingcriticsflowmatching} also shows that \texttt{floq} outperforms strong distributional RL baselines such as C51~\citep{bellemare2017distributional} and IQN~\citep{dabney2018implicit}. Taken together, these findings demonstrate that flow-matching critics can perform extremely well even in the absence of distributional RL training, ruling it out as an explanation. These results also motivate the use of \texttt{floq} for the rest of the analysis.

\begin{AIbox}{Takeaway: Distributional RL does not explain success of flow-matching critics.}
\begin{itemize}[itemsep=-2pt]
 \setlength{\leftskip}{-15pt}
    \item Standard \texttt{floq} outperforms its distributional variant although it does not fit the return distribution, as it learns lower variance Q-value distributions. 
\end{itemize}
\end{AIbox}

\vspace{-0.2cm}
\section{Flow Matching Enables Test-Time Recovery}
\label{sec:ttr}
\vspace{-0.1cm}

So far, we have seen that the empirical gains of \methodname{} are not explained by distributional RL. Why, then, does flow matching work well? In this section, we develop a mental model of how flow-matching Q-functions learn robust value estimates by leveraging iterative computation.

\textbf{Our main claim here} is that a flow-matching critic can \emph{correct} imperfect intermediate estimates produced during the integration of the flow. As more integration steps are performed, the final Q-value becomes less sensitive to errors made earlier in the integration process. We refer to this as \textcolor{lightblue}{\textbf{\emph{test-time recovery (TTR)}}} and this is a direct consequence of the iterative process of integration. \textbf{Crucially however}, TTR does not arise from integration at test time alone, but from the \emph{training procedure} used. Flow matching provides dense supervision of local velocity predictions for multiple interpolant values throughout the integration trajectory. This kind of supervision induces a correction mechanism that we quantify via a geometric condition (Definition~\ref{def:conic}). Monolithic critics are not trained to satisfy such a condition.  This perspective explains the two central design knobs of flow-matching critics: \emph{\textbf{iterative computation}} and \emph{\textbf{dense supervision}}. In the next section, we show that this training procedure also induces \emph{plastic} features, that put together with TTR explains the efficacy of flow-matching for TD learning. 

\vspace{-0.2cm}
\subsection{Formalizing Test-Time Recovery}
\label{subsec:ttr_definition}
\vspace{-0.1cm}

We begin by formalizing the notion of test-time recovery.
Intuitively, test-time recovery describes the ability of a flow-matching Q-function to compensate for errors or inconsistencies introduced during inference, using a \textbf{\emph{single set of trained parameters}}, by controlling the number of integration steps for computing Q-values. This mechanism is absent in monolithic critics that only query the network once.

We formalize this notion in Definition~\ref{def:ttr} by deriving a condition that perturbations (i.e., $\xi_k$) to velocity evaluations at a given step are progressively dampened along the integration trajectory. This means that errors induced at earlier integration steps can be corrected by later integration steps.

\vspace{-0.08in}
\definecolor{figurebackground}{HTML}{F9F8F3}
\begin{tcolorbox}[
  colback=figurebackground,
  boxrule=0pt,
  left=6pt,
  right=6pt,
  top=6pt,
  bottom=6pt
]
\begin{definition}[\emph{Test-Time Recovery}]
\label{def:ttr}
Fix a state-action pair $(\bs,\ba)$ and let $\psi_\theta(t,\bz | \bs,\ba)$ denote the flow interpolant obtained by integrating the velocity field $v_\theta$.
Consider a $K$-step integrator with step size $\eta = 1/K$ and discrete times $t_k = k/K$.
Let $\{\psi^k\}_{k=0}^K$ be the unperturbed trajectory defined by
\begin{align*}
\psi^{k+1} = \psi^k + \eta\, v_\theta(\psi^k, t_k| \bs,\ba),
\end{align*}

and let $\{\tilde{\psi}^k\}_{k=0}^K$ be a perturbed trajectory from the same initial noise $\bz$ defined as
\begin{align*}
\tilde{\psi}^{k+1}
:=
\tilde{\psi}^k
+
\eta\big(
v_\theta(\tilde{\psi}^k, t_k \mid \bs,\ba)
+
\xi_k
\big),
~~ \tilde{\psi}^0 = \psi^0 = \bz,
\end{align*}
where $\{\xi_k\}_{k=0}^{K-1}$ are \emph{arbitrary} perturbations incurred in  velocity evaluations. We say that a trained flow-matching critic exhibits \emph{test-time recovery} if the terminal error of the flow
$\Delta_K(\bz) := \tilde{\psi}^K - \psi^K$
satisfies:
$\|\Delta_K(\bz)\| \;\le\; \beta_K  \big[\max_{k = 0}^{K-1}\|\xi_k\|\big]$,
for a stability factor $\beta_K < 1$ that decreases with $K$.
\end{definition}
\end{tcolorbox}
When $\beta_K$ decreases with $K$ in the definition above, increasing the number of integration steps still enables contraction towards the unperturbed target. 
Our Theorem~\ref{thm:conic_implies_ttr_strong} (in Appendix~\ref{ttr_proofs}) shows that flow-matching critics satisfy $\beta_K \propto K^{-c'}$,
for some positive constant $c'$. This decay arises because flow matching fits a velocity field at every timestep of the integration process and across a broad range of interpolant inputs $\bz$. Doing so enforces locally corrective updates throughout the trajectory. This notion of corrective updates can be formalized via a \emph{conic condition} on the learned velocity field as follows.
\begin{tcolorbox}[
  colback=figurebackground,
  boxrule=0pt,
  left=6pt,
  right=6pt,
  top=6pt,
  bottom=6pt
]
\vspace{-0.1cm}
\begin{definition}[$c$-conic condition on a given velocity field; \textit{simplified}]
\label{def:conic}
Let $[l,u]$ be the initial noise range, and let $[l_1 (\bs,\ba), u_1(\bs,\ba)]$ be the range of final outputs for a given $(\bs, \ba)$ pair. Assume that $|u_1(\bs,\ba) - l_1(\bs,\ba)| < |u - l|$,
so that the final output range is strictly narrower than the initial noise interval. Let $K > 0$ be the number of integration steps and define the conic region
\begin{align*}
\mathcal{C}_{K}(\bs,\ba) 
:= \Big\{(\bz,t) ~\Big|~ 
(1-t) \cdot l + t\cdot l_1(\bs,\ba) \le \bz \le (1-t) \cdot u  + t\cdot u_1(\bs,\ba), \;
0 \le t \le 1 - \nicefrac{1}{K}
\Big\}.
\end{align*}
We say that the velocity field $v_{\theta^*}(\bz,t \mid \bs,\ba)$ is $c$-conic (for $0<c<1$) on $\mathcal{C}_{K}(\bs,\ba)$ if
\begin{align}
\label{eq:derivative_condition}
\frac{\partial v_{\theta^*}(\bz,t \mid \bs,\ba)}{\partial \bz}
\le -\frac{c}{1-t}, \quad \forall (\bz, t) \in \mathcal C_{K}(\bs, \ba).
\end{align}
\end{definition}
\end{tcolorbox}
Intuitively, modulo some boundary effects, the $c$-conic condition ensures that, integration trajectories starting from $\bz \sim \text{Unif}~[l, u]$ remain within the conic region $\mathcal{C}_K$, and contract toward the target value over time. In particular, Equation~\ref{eq:derivative_condition} controls the rate at which the distance between ``particles'' $(\bz,t)$ and $(\bz+\Delta \bz,t)$ shrinks as they evolve under the flow. The contraction rate scales as $(1-t)^{-1}$, and therefore becomes stronger as $t \to 1$. This scaling reflects the geometry of the supervision region: the admissible interval for $\bz$ decreases with $t$, forming a narrowing cone that contracts. To remain within this shrinking region, the velocity field must induce sufficiently strong contraction as $t$ increases. Thus, the conic condition enforces a form of corrective behavior over time.

Under the $c$-conic condition (and mild assumptions~\ref{assump:smooth} and \ref{assump:boundary_condition} on $v_{\theta^*}$), Theorem \ref{thm:conic_implies_ttr_strong} shows that $\beta_K$ satisfies $\beta_K \propto K^{-c'}$ for a positive constant $c' > 0$. As long as we train the velocity field to map a broad range of initial interpolants $\bz \sim \text{Unif}~[l,u]$ onto a fixed expected target Q-value, it is natural to expect it to learn a  ``funnel'' shape and condition to be satisfied in practice. This provides a mental model for why flow-matching critics benefit from multiple integration steps. In contrast, monolithic critics lack such intermediate supervision and are not explicitly trained to learn features that can enjoy the contraction condition. Hence, $\beta_K$ need not decrease with $K$, and TTR may not arise.

\vspace{-0.25cm}
\subsection{Experimental Study: Test-Time Recovery (TTR) in Practice}
\label{subsec:ttr_experiments}
\vspace{-0.2cm}

We now evaluate the ability of flow-matching critics to perform test-time recovery. To do so, we introduce controlled perturbations into the integration process and measure how sensitive the Q-value estimates are to these interventions. Our goal is to evaluate if supervising the velocity field densely is important for TTR. For comparison, we also construct analogous perturbations for monolithic critics.

\textcolor{lightblue}{\textbf{\emph{Experiment:} Injecting  
staleness.}}
We split the integration procedure into two phases. After training a flow-matching critic for $T = 250,000$ gradient steps, we evaluate a modified procedure in which the first $\kappa\%$ of the integration steps for flow matching ($\kappa \in \{0, 25, 50, 75, 100 \}$) are intentionally performed using a stale snapshot of the velocity field taken from the checkpoint at time $T$, while the remaining steps are completed using the current network parameters.
We compare this procedure to a baseline which runs all integration steps with current parameters ($\kappa=0$). If flow-matching exhibits test-time recovery, then later integration steps should be able to correct \emph{at least some} errors arising from staleness of the early steps of the integration trajectory, even when the parameters of the velocity field cannot be changed.

\begin{wraptable}{r}{0.48\textwidth}
\vspace{-0.4cm}
\centering
\small
\caption{
\footnotesize{\textbf{\emph{Effect of injecting staleness into early integration steps}} of the flow-matching critic.
Entries are shaded by performance degradation relative to the best value within each environment
as $\kappa=\{0,25,50,75,100\}\%$ increases from left to right. Note that a flow-matching critic \emph{can} still succeed in many cases when the first 25\% or even 50\% of the integration is done with a stale velocity field. On two out of three environments, using a stale velocity field improves performance slightly (best performance in each row is in \textcolor{lightblue}{\textbf{bold}}).}
}
\label{tab:staleness_singlecol_shaded}
\vspace{-0.2cm}
\resizebox{0.99\linewidth}{!}{
\begin{tabular}{lc}
\toprule
Env. (default task) & Success as stale fraction $\kappa$ increases \\
\midrule
{antsoccer-arena} &
$\shadebyYR{45}{50}{35} \rightarrow \textcolor{lightblue}{\mathbf{50}} \rightarrow
\shadebyYR{47}{50}{35} \rightarrow \shadebyYR{46}{50}{35} \rightarrow \shadebyYR{35}{50}{35}$ \\

{hmmaze-medium} &
$\textcolor{lightblue}{\mathbf{98}} \rightarrow \shadebyYR{43}{98}{39} \rightarrow
\shadebyYR{63}{98}{39} \rightarrow \shadebyYR{62}{98}{39} \rightarrow \shadebyYR{39}{98}{39}$ \\

{cube-double} &
$\shadebyYR{72}{84}{58} \rightarrow \textcolor{lightblue}{\mathbf{84}} \rightarrow
\shadebyYR{82}{84}{58} \rightarrow \shadebyYR{58}{84}{58} \rightarrow \shadebyYR{64}{84}{58}$ \\
\bottomrule
\end{tabular}
}
\vspace{-0.4cm}
\end{wraptable}
\textbf{Results.} As hypothesized, in Table~\ref{tab:staleness_singlecol_shaded} we observe empirical evidence of test-time recovery. On 2 of 3 environments, using stale velocity parameters for the first 25\% or even 50\% of the integration steps in the critic yields policies with higher success rates than using no stale velocities at all, indicating that flow-matching critics can recover from errors introduced early during integration. This behavior is not universal, as on the \texttt{humanoidmaze-medium} task any amount of staleness leads to substantial performance degradation. Nonetheless, these results do show that staleness at early steps may not be a problem for \texttt{floq} networks, at least in some amount.

In contrast, applying an analogous intervention to monolithic feed-forward critics consistently results in pronounced performance drops, as shown in Figure~\ref{fig:plasticity} where we freeze early layers of the network. We observe similar degradation for ResNet critics with residual connections and  transformer critics (Figure~\ref{fig:resnet_plasticity_fql}), suggesting that a computation graph resembling an integration process is insufficient to induce TTR. Instead, our results show that recovery requires training with a flow-matching loss itself. Finally, as we discuss next, flow matching also induces more robust feature representations, explaining why flow-matching critics remain resilient not only to stale velocities but also to stale internal features.

\vspace{-0.15cm}
\begin{AIbox}{Takeaway: Injecting staleness into earlier steps of integration}
\begin{itemize}[itemsep=-2pt]
 \setlength{\leftskip}{-15pt}
    \item Flow-matching critics can recover from staleness of early integration computations but an analogous mechanism to recover from staleness is absent in monolithic critics.
\end{itemize}
\end{AIbox}

\textcolor{lightblue}{\textbf{\emph{Experiment:} Robustness to noisy TD targets.}}
In the previous experiment, we perturbed test-time integration by freezing early integration steps while keeping training unchanged. In this experiment, we instead corrupt the \emph{training-time supervision}. Specifically, we add randomly sampled i.i.d.\ zero-mean noise to the TD \emph{velocity} targets used at each training step. Although the injected noise has zero mean, a critic may overfit to these corrupted targets and consequently learn biased Q-values. As a result, this corruption should degrade policy performance for any method. However, if flow-matching critics exhibit test-time recovery (TTR), they should be more robust to increasing noise magnitude. To test this, we inject noise into the TD velocity targets at each training step following the procedure described in Appendix~\ref{subsec:offline_rl_analysis_details}. For comparison, we train a monolithic Q-network with the same noisy TD targets as well.

\begin{wrapfigure}{r}{0.6\linewidth}
\centering
\vspace{-0.4cm}
\includegraphics[width=0.99\linewidth]{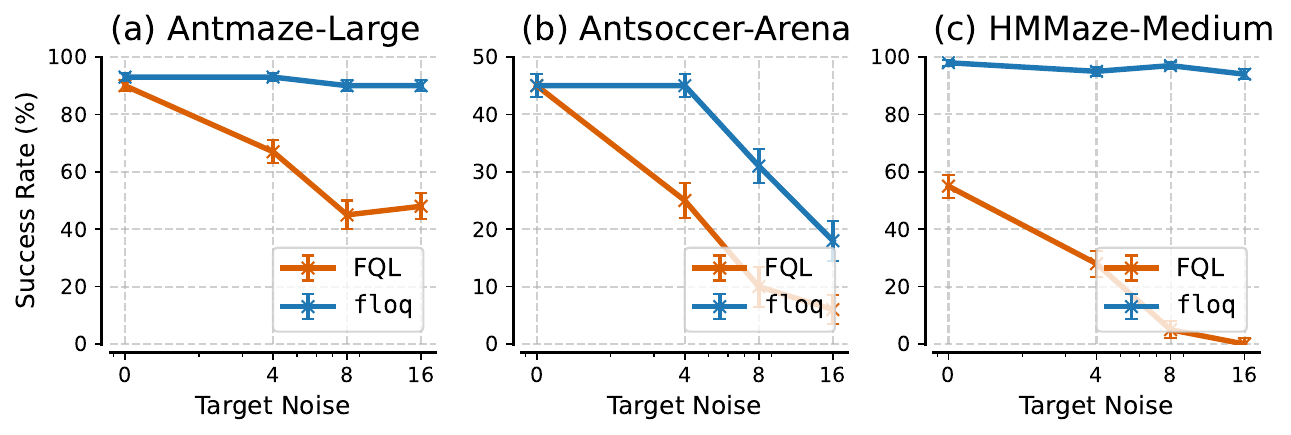}
\vspace{-0.2cm}
\caption{\footnotesize \textbf{\emph{Performance of flow-matching (\methodname{}) and monolithic (FQL) critics when trained with target noise}}. Observe that flow-matching critics are much more robust to noise in TD targets, while performance of FQL (monolithic critic) degrades substantially faster, even when they start at a similar point (antmaze/antsoccer).}
\vspace{-0.35cm}
\label{fig:target_noise_plots}
\end{wrapfigure}
\textbf{Results.}
Figure \ref{fig:target_noise_plots} shows that performance degradation is more graceful for the flow-matching critic, which consistently maintains higher performance  (and even suffers from no degradation in some cases) than the monolithic approach as noise magnitude increases. These results suggest that robustness to noisy supervision can arise in flow-matching critics. Overall, we see that the learned integration dynamics of flow critic remains more stable, allowing later integration steps to partially attenuate the effect of noisy supervision. 

\begin{AIbox}{Takeaway: Robustness to noisy TD targets from TTR}
\begin{itemize}[itemsep=-2pt]
 \setlength{\leftskip}{-15pt}
    \item While performance of monolithic networks degrades as we add i.i.d. noise into TD targets, flow critics are more robust and exhibit a smaller degradation for a given noise magnitude.
\end{itemize}
\end{AIbox}

\vspace{-0.2cm}
\section{Flow Matching Learns Plastic Features}
\label{sec:plasticity}
\vspace{-0.1cm}

The experiments in the previous section show that flow-matching critics exhibit test-time recovery, and that this behavior arises from dense supervision along the integration trajectory rather than from the computation graph alone. We now argue that this same training mechanism also induces more \emph{plastic} representations in the velocity network. By fitting velocities at each integration step, the network learns features that can support multiple TD targets when evaluated through the integration process at test time, without requiring the features themselves to be substantially modified. Such plastic features lead to improved stability and performance in high update-to-data (UTD) ratio settings.

\textbf{Intuition.} Non-stationary TD targets computed on unseen actions at the next state ($\ba' \sim \pi(\cdot|\bs')$) can lead to pathologies such as feature rank collapse, exploding feature norms, and dead neurons, as critics must repeatedly overwrite their features to track moving targets, eventually exhausting representational capacity~\citep{kumar2021implicit,lyle2022learning,nikishin2022primacy}. We hypothesize that flow-matching critics alleviate these issues by supervising the network densely through the integration process. Instead of directly modifying features to match each new TD target, this process distributes corrections to the value estimate across integration steps. As a result, the integration process itself can accommodate large changes in predicted values without requiring substantial modifications to internal representations. Consequently, learned features (especially in early layers) do not overfit to current targets and retain plasticity. \textcolor{lightblue}{\emph{\textbf{Viewed through this lens, training with a flow-matching loss plays a dual role:}}} \emph{at test time}, it enables recovery and improves robustness of Q-value estimates; \emph{at training time}, it buffers non-stationary TD targets from network representations, promoting more stable feature learning. We now present evidence supporting this hypothesis.

\vspace{-0.2cm}
\subsection{Theoretical Analysis in a Linear Setting}
\label{subsec:toy_setting}
\vspace{-0.1cm}

We start by analyzing a simple yet informative linear setting. We consider TD learning with a \emph{linear} flow-matching critic and compare it to a corresponding ``deep'' linear monolithic critic~\citep{arora2018optimization}. Although both models express linear functions of the input, their learning dynamics differ.

\textbf{Linear model setup.}
We consider learning linear predictors on inputs $\bx\in\mathbb{R}^d$, when training against non-stationary targets. We denote the non-stationary targets as $y(m)$, where $m$ denotes the training step. A \textbf{\emph{monolithic} critic} is given by
$f_{\mathrm{mono}}(\bx;m) \triangleq w(m)^\top \bx$, where $w(m)\in\mathbb{R}^d$ is the effective weight and $\bx$ compactly represents the concatenated input $[\bs,\ba]$. The effective weight vector $w(m)$ is given by a product of multiple linear layers, i.e., $w(m) = \prod_{t=1}^T u_t(m)$. A deep linear ResNet~\citep{he2015deep} is given by: $w(m) = \prod_t (I + u_t(m))$. Any monolithic critic directly trains the parameters in $w(m)$ against the non-stationary $y(m)$ by minimizing squared error via gradient descent.

A \textbf{\emph{flow-matching} critic} with a comparable linear architecture can be characterized explicitly by unrolling the integration process. Since the interpolant at each step depends on the output of the previous step, the resulting mapping is recursive in the input $\bx$. Consider a linear velocity network at each integration step. Let $\{u_t(m)\}_{t=1}^{T-1}$ denote the linear velocity slices and let $\{\alpha_t\}_{t=1}^{T-1}$ denote the integration step sizes. Then the output velocity field is: $v_\theta(\bz, t|\bx) = v_t(m) \cdot \bz + u_t(m)^\top \bx$. We refer to $u_t(m)$ as the weights parameterizing the $t$-th ``slice'' and $v_t(m)$ denotes the gain parameters that multiply the interpolant. 

Unrolling the $T$-step Euler integration yields the following \emph{expected} output of the integration process. Here, expectation is taken over the initial noise $\bz$. If the noise mean is non-zero, an additional bias term independent of $\bx$ appears, which we remove for clarity as it does not change any of our conclusions.
{
\setlength{\abovedisplayskip}{5pt}
\setlength{\belowdisplayskip}{5pt}
\begin{align}
f_{\mathrm{FM}}(\bx;m)
=
\sum_{t=1}^{T-1}
\beta_t(m) \cdot 
\underbrace{u_t(m)^\top \bx}_{\text{\textcolor{lightblue}{\textbf{Output from $t$-th linear ``slice''}}}},
\label{eq:fm_linear}
\end{align}
}\!where the integration-dependent amplification coefficient is $\beta_t(m)
\triangleq
\alpha_t
\prod_{k=t+1}^{T-1}
\big(1 + \alpha_k v_k(m)\big)$. 
Here $\{v_k(m)\}_{k=1}^{T-1}$ denote scalar ``gain'' parameters induced by recursive integration. The gains $v_k(m)$ quantify how the contribution of the $t$-th velocity computation propagates through subsequent integration steps to influence the final Q-value prediction. Because each integration step multiplicatively rescales the interpolant via a factor depending on $v_k(m)$, earlier slices can be amplified or attenuated depending on downstream gain dynamics, leading to a structured reweighting of feature directions. Thus, although $f_{\mathrm{FM}}$ is linear in $\bx$, its effective weight vector
\begin{align}
w_{\mathrm{FM}}(m)
=
\sum_{t=1}^{T-1}
\beta_t(m)\,u_t(m)
\end{align}
is not freely parameterized. Instead, it decomposes into velocity slices whose coefficients are coupled through the integration procedure. \textbf{\textit{We argue below that}} monolithic networks can adapt only by updating $w(m)$ (and hence, each of $u_t(m)$ since there is one supervision on the entire weight $w(m)$), whereas flow-matching critics can adapt by reweighting existing features $\{u_t(m)\}$ through changes in the gain dynamics $\{v_t(m)\}$ due to dense supervision. Our theoretical result for this setting is given below.

\vspace{-0.1cm}
\begin{tcolorbox}[
  colback=figurebackground,
  boxrule=0pt,
  left=6pt,
  right=6pt,
  top=6pt,
  bottom=6pt
]
\begin{theorem}[\textbf{Flow critics can learn by reweighting existing features; monolithic critics must modify features.}]
\label{thm:feature_reweighting}
Consider training monolithic and flow-matching models by minimizing squared error against a non-stationary target $y(m)$. Fix an interval of training steps $m \in [m_0,m_1]$ and suppose
\begin{align*}
\dot u_t(m)=0
\quad
\text{for all } t \text{ and } m\in[m_0,m_1],
\end{align*}
i.e., feature directions are frozen during this training interval. Then the following hold:

\begin{enumerate}[itemsep=1pt]
\item \textbf{(Monolithic).}
If $\nicefrac{d}{dm} f_{\mathrm{mono}}(\cdot;m) \neq 0$ {on } $m \in [m_0,m_1]$, then necessarily $\exists u_t ~\text{s.t.}~ \dot u_t(m)\neq 0$.

\item \textbf{(Flow-matching).}
The Euler flow-matching predictor satisfies
\begin{align*}
\frac{d}{dm} f_{\mathrm{FM}}(\bx;m)
=
\Big(
\sum_{t=1}^{T-1}
\dot\beta_t(m)\,u_t(m)
\Big)^\top \bx,
\end{align*}
where
\begin{align*}
\dot\beta_t(m)
=
\beta_t(m)
\sum_{k=t+1}^{T-1}
\frac{\alpha_k\,\dot v_k(m)}{1+\alpha_k v_k(m)}.
\end{align*}
Thus, even when feature directions $\{u_t(m)\}$ remain \textbf{all} fixed, the predictor can adapt through changes in the gain parameters $\{v_t(m)\}$, i.e., $\dot v_k(m) \neq 0 \;\Rightarrow\;  \dot\beta_t(m) \neq 0,\;\forall\, t<k$.

\iffalse
\item \textbf{(Different responses at different integration steps).}
With squared loss, the gain dynamics satisfy
\begin{align*}
\dot v_t(m)
=
-2\,\alpha_t\,\big(f_{\mathrm{FM}}(\bx;m)-y(m)\big)\,s_t(\bx;m),
\end{align*}
where $s_t(\bx;m)$ denotes the slice-specific activation entering the $t$-th integration step. Since $s_t$ contains a factor proportional to $(1-\alpha_t)$ multiplying the target, the sensitivity of $\dot v_t(m)$ to changes in $y(m)$ scales with $(1-\alpha_t)$. Consequently, slices with smaller $\alpha_t$ (typically later in integration) respond more strongly to changes in the target, while earlier slices respond more weakly.
\fi
\end{enumerate}
\end{theorem}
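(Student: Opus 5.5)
Both parts follow by direct differentiation of the two parameterizations with respect to the training time $m$, using the chain rule and the standing hypothesis $\dot u_t(m)=0$ on $[m_0,m_1]$.

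\textbf{Monolithic part.} We argue by contrapositive. Suppose every $\dot u_t(m)=0$ on $[m_0,m_1]$. Then each layer $u_t(m)$ is constant there. For the plain deep linear product $w(m)=\prod_t u_t(m)$, the product rule gives
\[
\dot w(m)=\sum_t \Big(\prod_{s<t}u_s\Big)\dot u_t\Big(\prod_{s>t}u_s\Big)=0 .
\]
The same computation applies to the ResNet form $\prod_t(I+u_t)$, since $\tfrac{d}{dm}(I+u_t)=\dot u_t$. Hence
\[
\tfrac{d}{dm}f_{\mathrm{mono}}(\bx;m)=\dot w(m)^\top\bx=0 \quad\text{for every }\bx .
\]
This contradicts the hypothesis $\tfrac{d}{dm}f_{\mathrm{mono}}\neq 0$. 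So some $\dot u_t(m)\neq0$, which is exactly the claim.

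\textbf{Flow-matching part.} I would first justify the representation in Eq.~\ref{eq:fm_linear}. Write the Euler recursion as
\[
\bz_{t+1}=(1+\alpha_t v_t)\bz_t+\alpha_t u_t^\top\bx .
\]
Unroll it and take expectation over the initial noise. The noise contributes an $\bx$-independent term, which is dropped as in the paper. Each slice output $u_t^\top\bx$ then enters with coefficient $\alpha_t\prod_{k>t}(1+\alpha_kv_k)=\beta_t$.

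Next, differentiate $f_{\mathrm{FM}}=\sum_t\beta_t u_t^\top\bx$ in $m$. This gives
\[
\frac{d}{dm}f_{\mathrm{FM}}=\sum_t\big(\dot\beta_t u_t+\beta_t\dot u_t\big)^\top\bx ,
\]
and the second term vanishes because $\dot u_t=0$. For $\dot\beta_t$, I take the logarithmic derivative of the product: the step sizes $\alpha_k$ are fixed, so
\[
\dot\beta_t=\beta_t\sum_{k>t}\frac{\alpha_k\dot v_k}{1+\alpha_kv_k}.
\]
This requires $1+\alpha_kv_k\neq0$. If $\beta_t=0$, the product rule gives the same expression with the zero factor removed, so I would state the nondegeneracy assumption explicitly.

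The main obstacle is the final implication $\dot v_k\neq0\Rightarrow\dot\beta_t\neq0$ for all $t<k$. As stated, it can fail if several $\dot v_k$ terms cancel in the sum, or if $\beta_t=0$. I would prove it under the reading that a single gain $v_k$ changes, with the other gains' derivatives zero, and all factors $1+\alpha_jv_j$ nonzero. Then every summand but one vanishes, and
\[
\dot\beta_t=\beta_t\,\frac{\alpha_k\dot v_k}{1+\alpha_kv_k}\neq0 .
\]
Since $\alpha_k>0$ and $\beta_t\neq0$, this holds exactly for the slices $t<k$, because only those have $k$ in their downstream product. Under the generic reading, I would say the implication holds for almost every direction of $\dot v$. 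The conclusion is that $f_{\mathrm{FM}}$ can change while all feature directions $u_t$ stay frozen, as long as the reweighted sum $\sum_t\dot\beta_tu_t$ is nonzero. That in turn holds whenever the $u_t$ are linearly independent and some $\dot\beta_t\neq0$.
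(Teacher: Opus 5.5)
Your proposal is correct and takes essentially the paper's route. For the monolithic case, the layer product $w(m)=\prod_t u_t(m)$ (or $\prod_t(I+u_t)$) is constant once every $\dot u_t=0$; the paper also writes out the gradient flow $\dot w=-2(\Sigma w-b)$, which the logical claim does not need. For the flow-matching case, you unroll the Euler recursion and take the logarithmic derivative of $\beta_t$, as the paper does. Your caveat is warranted: the implication $\dot v_k\neq 0\Rightarrow\dot\beta_t\neq 0$ needs $1+\alpha_j v_j\neq 0$ and no cancellation among the $\dot v_k$ terms, and the paper's proof only gestures at this final step rather than establishing it.
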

\end{tcolorbox}
% \vspace{-0.1cm}

\textbf{Theorem interpretation.}
This theorem isolates the mechanism underlying feature learning in monolithic and flow critics. 
When the feature directions $\{u_t(m)\}$ are held fixed over an interval $[m_0, m_1]$ of gradient updates, a flow-matching critic can still track changes in the regression target by adapting the predictor exclusively through the coefficients $\{\beta_t(m)\}$. These coefficients evolve through the gain dynamics $\{\dot v_t(m)\}$ (Appendix~\ref{app:toy_setting}), which arise from the recursive structure of integration at test time and the dense supervision during training. Thus, the effective weight vector $w_{\mathrm{FM}}(m) = \sum_{t=1}^{T-1} \beta_t(m)\,u_t(m)$
can change even when the feature directions of the network (i.e., $u_1, u_2, \cdots, u_T$) themselves remain fixed. 
Adaptation to the non-stationary target is therefore mediated by a redistribution of contributions across integration slices, allowing the predictor to reallocate mass among previously learned features without overwriting them.

\textcolor{lightblue}{\emph{\textbf{We remark that the distinction here \underline{does not} arise from representation capacity of the network.}}} The difference lies in dense vs final output supervision: flow matching provides step-wise supervision at each step of integration, inducing gain dynamics that enable feature reweighting. In contrast, a monolithic predictor $f_{\mathrm{mono}}(\bx; m)= w(m)^\top \bx$ that trains the final output to match the target admits no such reweighting behavior. Thus, changing the monolithic critic requires $\dot w(m)\neq 0$, and hence, at least one of $u_1, u_2, \cdots, u_T$ must change. Thus monolithic critics need to modify features to track  the current TD target, while flow-matching critics can learn fairly ``general'' features that can represent multiple TD targets over the course of learning. This formalizes our intuition about feature preservation and plasticity discussed at the beginning of this section. For completeness, we extend Theorem~\ref{thm:feature_reweighting} to ensembles of monolithic networks in Appendix~\ref{sec:monolithic_linear_new}, and identify a similar limitation with them.

\vspace{-0.2cm}
\subsection{Empirical Evidence of Plastic Feature Learning}
\label{subsec:plasticity_experiments}
\vspace{-0.1cm}

To empirically test this intuition, we examine whether flow-matching critics learn more plastic and qualita-tively different feature representations than monolithic networks. In particular, we ask whether the interac-
\begin{wrapfigure}{r}{0.66\textwidth}
\vspace{-0.1cm}
\centering
\includegraphics[width=0.99\linewidth]{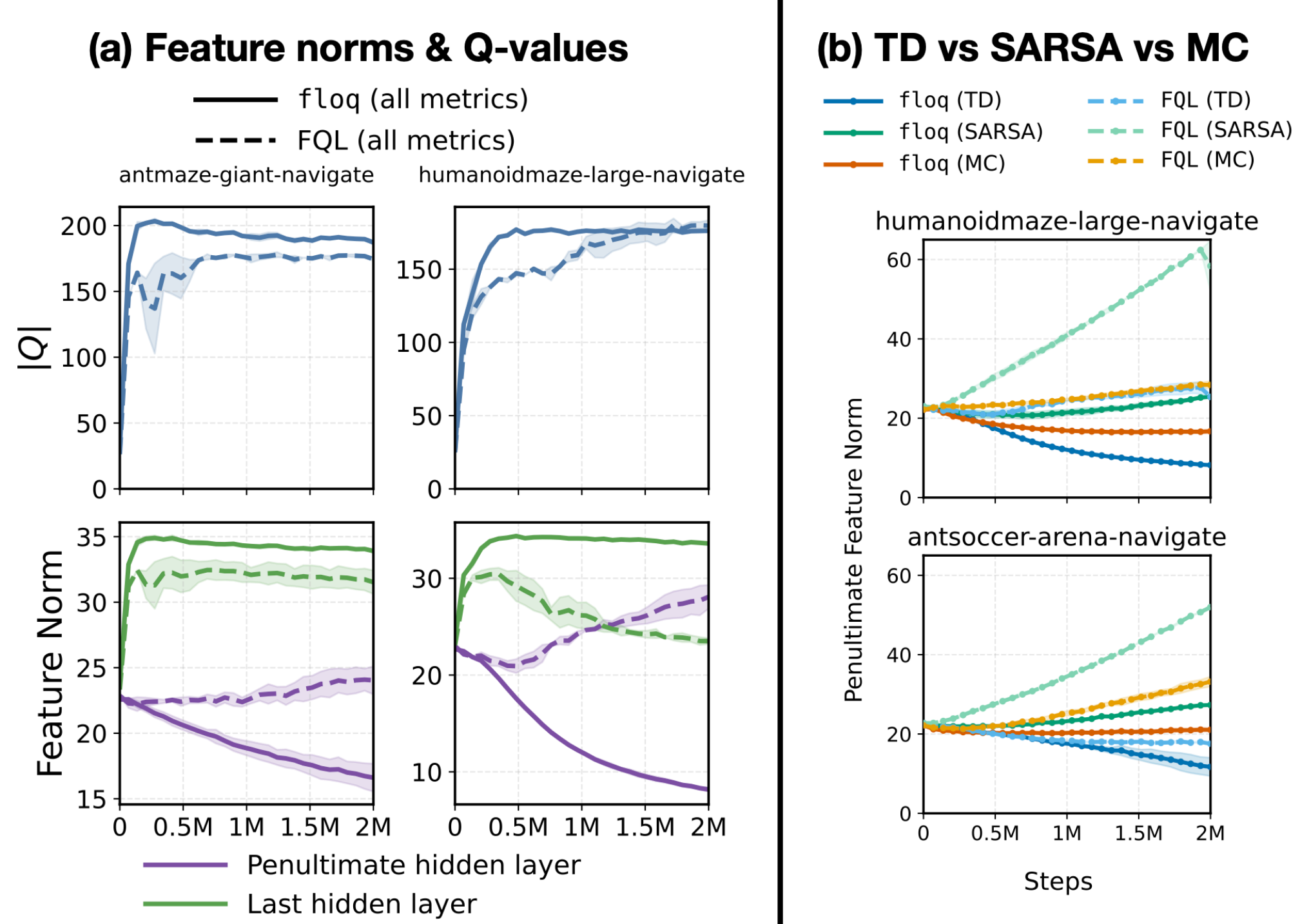}
\vspace{-0.1cm}
\caption{\footnotesize \textbf{\emph{Feature norms.}}
\textbf{(a)} Learned feature norms and average Q-values for monolithic critics (FQL) and flow-matching critics (\methodname{}) in the penultimate and last hidden layers. While the last hidden layer adapts to the scale of Q-values for both methods, the penultimate hidden layer in \methodname{} exhibits a much more rapid decrease in feature norms compared to FQL. This indicates that \methodname{} learns more flexible and adaptive features in the penultimate hidden layer that are largely decoupled from the magnitude of Q-values.
\textbf{(b)} Penultimate hidden layer feature norms for \methodname{} trained with TD, SARSA, and MC targets. \methodname{} with TD shows the fastest decrease in feature norms, whereas SARSA and MC trends resemble those of the monolithic FQL critic. This suggests that flow-matching critics, particularly under TD learning, develop more robust representations under non-stationary targets.}
\vspace{-0.4cm}
\label{fig:floq_fql_feature_norm}
\end{wrapfigure} 
tion between flow matching and TD learning leads to differences in learned representations relative to monolithic critics, and whether such differences are specific to non-stationarity in TD learning. We run several experiments that we describe in the section below.

\textcolor{lightblue}{\textbf{\emph{Experiment:} Measuring properties of learned features.}}
We measure the $\ell_2$-norm of \emph{post-layernorm} features learned by the velocity network for flow-matching critics and the critic network for monolithic critics for three algorithms:
\textbf{(a)} TD learning,
\textbf{(b)} SARSA (i.e., using the dataset action for the TD backup), and
\textbf{(c)} supervised regression to pre-computed Monte Carlo (MC) returns.
As shown in Figure \ref{fig:floq_fql_feature_norm} (ref. Fig \ref{fig:floq_fql_feature_norm_appendix} for more tasks), across several tasks, flow-matching critics trained with TD learning learn much lower-norm penultimate hidden-layer features than monolithic critic networks, despite the absence of any explicit regularization, which has in fact appeared as a desirable property~\citep{kumar2023offline, hussing2024dissecting}.
This also indicates that a significant burden of modeling the Q-value scale is deferred to the final layers and the integration, rather than encoded through the network, despite not putting any explicit regularizer to do so. As such, the model is less likely to learn spurious features to explain the changes in magnitude of TD targets.

Notably, this effect is absent when training with SARSA or Monte-Carlo (MC) regression (Figure~\ref{fig:floq_fql_feature_norm}; see also Figure~\ref{fig:td_mc_sarsa_appendix} for additional tasks), where both flow-matching and monolithic critics exhibit similar feature statistics. This suggests that the representational differences arise specifically from the interaction between flow matching and TD learning, particularly the non-stationarity induced by bootstrapped TD targets during training. In contrast, SARSA and MC regression admit more stationary targets, especially later in training. As a result, both models only need to learn features that ``overfit'' to a fixed value function, and differences in features of flow-matching and monolithic models wear off over training.

\begin{AIbox}{Takeaway: Feature norms at all but last layers reduce for flow critics}
\begin{itemize}[itemsep=-2pt]
 \setlength{\leftskip}{-15pt}
    \item The norm of features at all but last layers of the network reduces with more training for flow-matching critics, while it increases for standard critics. These trends are absent for SARSA and supervision regression onto Monte-Carlo return run for enough training steps.
\end{itemize}
\end{AIbox}

\begin{figure}[t] 
\centering 
\vspace{-0.2cm} 
\includegraphics[width=0.95\linewidth]{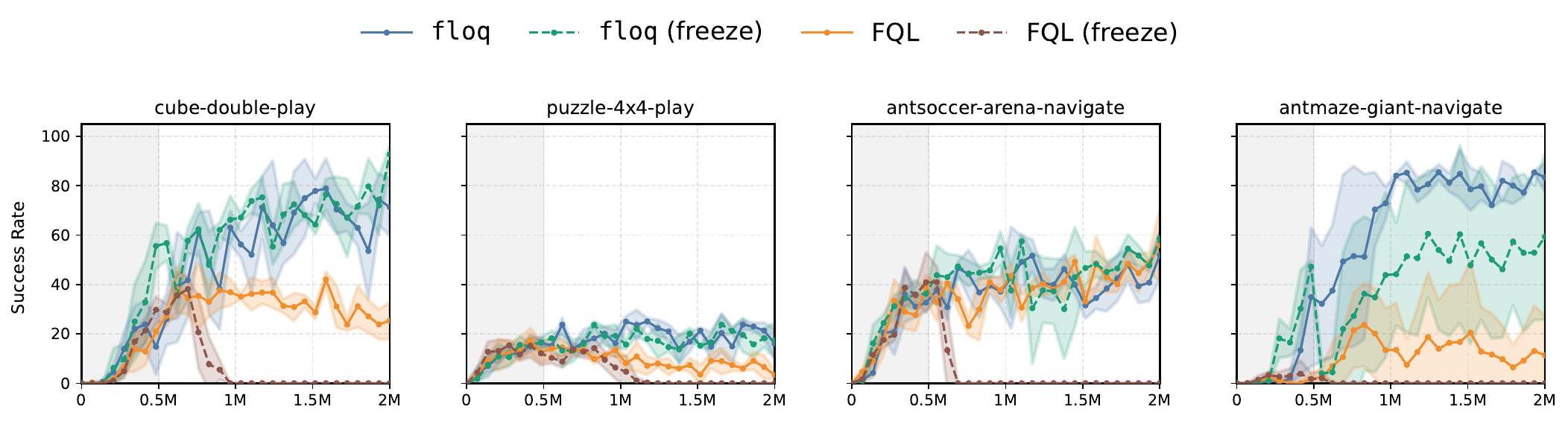} \vspace{-0.4cm} 
\caption{\footnotesize \emph{\textbf{Measuring feature plasticity}} on four tasks, by freezing all layers except the final two at $T = 0.5M$ steps (gray shaded region denotes the pre-freeze phase). Solid curves correspond to the default (fully trained) runs, while dashed curves show performance after freezing the penultimate hidden features. Across all environments, \emph{FQL with a monolithic critic exhibits a sharp performance collapse} once features are frozen (brown vs orange), indicating in inability to represent future Q-functions. In contrast, flow-matching critics remain stable and continue to improve after freezing, showing substantially greater plasticity.} 
\vspace{-0.3cm} 
\label{fig:plasticity} 
\end{figure}

\begin{wrapfigure}{r}{0.73\textwidth}
\vspace{-0.15cm}
\centering
\includegraphics[width=0.99\linewidth]{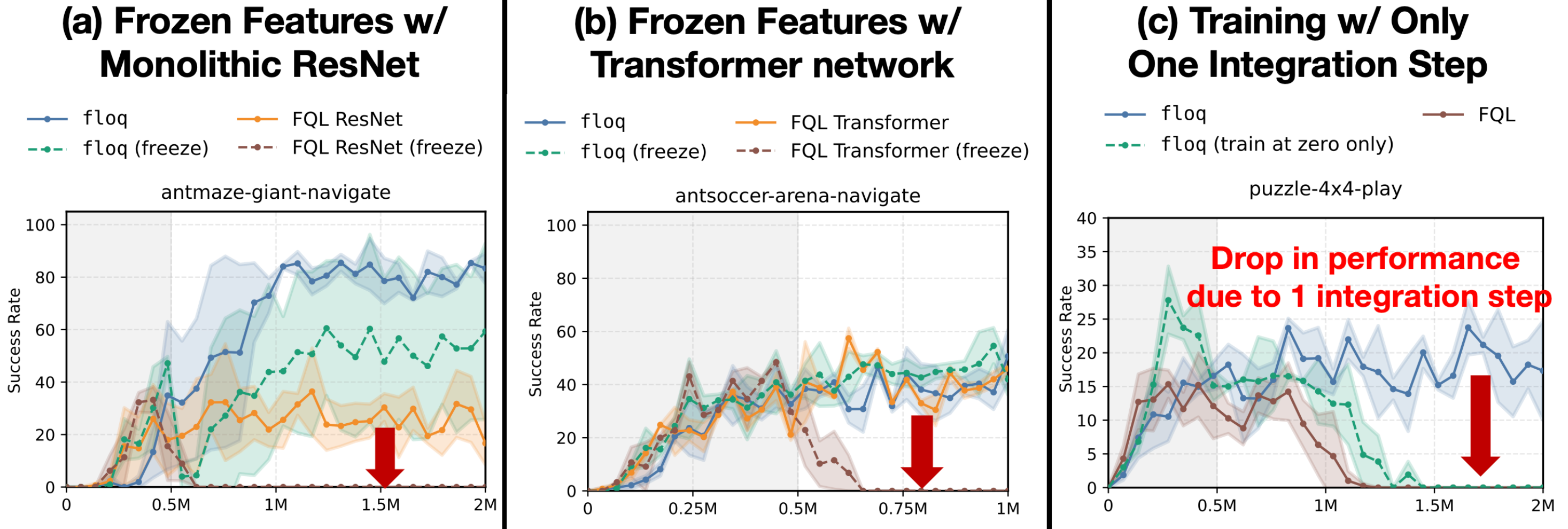}
\vspace{-0.1cm}
\caption{\footnotesize
\emph{\textbf{(a) Frozen features with a monolithic ResNet critic.}}
Although a monolithic ResNet admits a computation graph similar to a flow-matching critic, freezing its features leads to a collapse in performance during subsequent offline RL training. \emph{\textbf{(b) Frozen features with a monolithic transformer-based critic.}} Despite performing well with the standard FQL algorithm, monolithic transformer critics still suffer a performance collapse when their layers are frozen during subsequent training, indicating that the plasticity issue persists across model architectures.
\emph{\textbf{(c) Frozen features with a single integration step.}}
With only one integration step, a flow-matching critic is more stable than a monolithic network, but less stable than full flow matching with multiple integration steps (performance drop indicated by the \textcolor{red}{red} arrow), highlighting the essential role of integration in preserving feature plasticity.
}
\vspace{-0.3cm}
\label{fig:resnet_plasticity_fql}
\end{wrapfigure}
\textcolor{lightblue}{\textbf{\emph{Experiment:} Measuring feature plasticity by freezing features.}} To assess whether these representational differences are consequential, we probe feature plasticity by freezing the early layers of the critic at an intermediate point during offline training and continuing TD learning on the offline dataset. If the learned features are sufficiently expressive to support future TD targets, the impact of freezing should diminish with further training. Observe in Figure~\ref{fig:plasticity}, freezing features causes monolithic critics (both ResNet and transformer-based\footnote{Our transformer architecture represents the state and action jointly using four tokens, which are fed into a bi-directional transformer. We found that this design achieves performance comparable to, and usually slightly better than, a ResNet-based FQL critic across a variety of tasks. Before adopting this simpler transformer architecture, we also experimented with the transformer design with an attention entropy regularizer proposed in TQL~\citep{dong2026tql}, anticipating that it might better mitigate plasticity loss. However, across the harder OG-Bench tasks we evaluated, the official TQL implementation consistently underperformed our simpler transformer design with no regularizer (e.g., 15\% for TQL on antsoccer-arena vs. 40\% for ours, and 0\% for TQL on humanoid-maze-large vs. 45\% for ours). Hence, we use our transformer (with no regularizer) here in the interest of simplicity.} architectures) to collapse to near-zero performance across almost all environments, with little to no recovery. In contrast, flow-matching critics recover to performance comparable to the unfrozen baseline on all but one environment, where performance remains competitive. This indicates that features learned by flow-matching critics remain useful for representing \emph{future value functions} that will be encountered, even when these features are no longer updated. This supports the notion that flow-matching learns features that help model the value improvement path~\citep{dabney2020value}, and hence learn plastic features that do not inhibit or slow-down future value learning. We further confirm this behavior by applying the same intervention to a monolithic critic with a ResNet-style architecture (see Figure~\ref{fig:resnet_plasticity_fql}), which also collapses once intermediate layers are frozen. 

Finally, to understand the impact of training critics with dense supervision on an integration path, we additionally evaluate a variant of flow-matching critics  trained with only a single integration step (``train at zero only''), where no integration is performed for training or inference. While this variant outperforms a monolithic critic (FQL) after the point of intervention (Figure~\ref{fig:resnet_plasticity_fql}; Figure~\ref{fig:plasticity_train_at_zero_only_appendix} for more tasks), it is substantially less stable than full flow matching, highlighting the crucial role of integration in enabling plastic representations. This means that training via a flow-matching loss is important but supervising at all interpolants through time imbues the network with better representations. 
Finally, we note that unlike in online RL, where learning can recover from frozen/reset features by collecting new data online~\citep{nikishin2022primacy}, such mechanisms are unavailable in offline RL. This means the learned features must remain suitable for supporting future TD targets in order to maintain performance in the setting of this experiment.

\begin{AIbox}{Takeaway: Features learned by flow-matching better represent future TD targets}
\begin{itemize}[itemsep=-2pt]
 \setlength{\leftskip}{-15pt}
    \item Flow-matching critics learn representations that can support future TD targets better: freezing features severely degrades monolithic critics but has little effect on flow-matching critics.
\end{itemize}
\end{AIbox}

\begin{wrapfigure}{r}{0.45\textwidth}
\centering
\vspace{-0.4cm}
\includegraphics[width=0.99\linewidth]{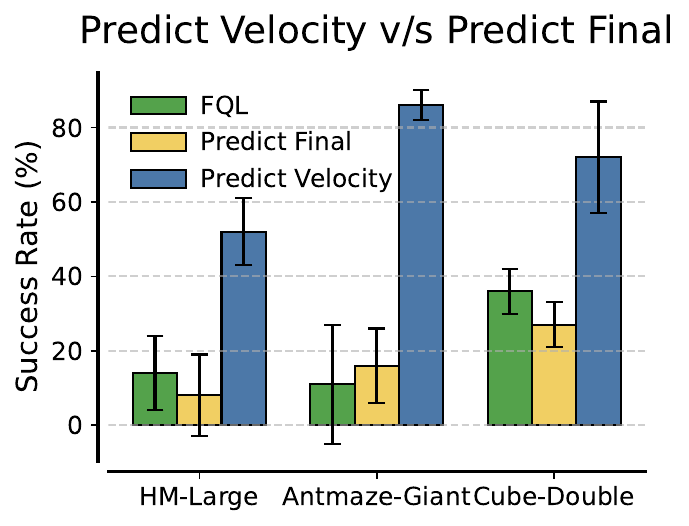}
\vspace{-0.8cm}
\caption{\footnotesize \textbf{\emph{Impact of predicting velocities vs. final TD targets}} at each integration step. Predicting the final TD-target substantially degrades the performance of a flow-matching critic, yielding performance comparable to monolithic critics. This underscores the importance of dense velocity supervision in flow-matching critics.}
\vspace{-0.5cm}
\label{fig:predict_local}
\end{wrapfigure}
\textcolor{lightblue}{\textbf{\emph{Experiment:} Intermediate velocity supervision is crucial.}} Finally, we test whether the specific choice of supervising the \emph{velocity field}, rather than absolute TD targets, is crucial for obtaining benefits of flow matching discussed so far.
Motivated by recent work~\citep{li2025back}, we implement a variant of flow matching in which each integration step is supervised to directly predict the TD target value itself, rather than the velocity.  As shown in Figure \ref{fig:predict_local}, although this variant still uses integration and receives dense supervision at every integration step, it fails to retain the benefits of flow matching degrading to a performance comparable to that of a monolithic critic. Empirically, we find that the network learns to ignore the interpolant and instead fit the target values independently at each step, effectively collapsing to an ensemble of monolithic critics.
As a result, performance degrades and the gains from iterative computation disappear. This behavior highlights that training a velocity field at each interpolant is critical for attaining better performance.

\begin{AIbox}{Takeaway: Fitting velocity is important for flow-matching critics.}
\begin{itemize}[itemsep=-2pt]
 \setlength{\leftskip}{-15pt}
    \item Supervising \emph{velocities} is essential for these benefits; directly supervising absolute TD targets collapses flow matching to monolithic behavior, eliminating test-time recovery and plasticity.
\end{itemize}
\end{AIbox}

\vspace{-0.2cm}
\section{Application: Flow Matching Critics Enable High-UTD Online RL with Prior Data}
\label{sec:app}
\vspace{-0.1cm}

The analysis above shows that the gains of flow matching (TTR and plastic features) arise from training the velocity field densely along the integration trajectory. Here, we examine whether these properties improve performance in high update-to-data (UTD) online learning regimes, where standard RL is believed to suffer from several pathologies such as plasticity loss~\citep{nikishin2022primacy}, value overestimation~\citep{redq}, etc. If a flow-matching critic indeed learns features capable of representing and fitting non-stationary TD targets, its performance should scale faster when increasing UTD, and not destabilize at very high values of UTD. To test this, we incorporate flow-matching critics into the RLPD framework~\citep{ball2023efficient}, enabling more aggressive data reuse. However, doing so required some modifications to the RLPD algorithm.

\begin{figure}[h]
    \centering
    \includegraphics[width=0.99\textwidth]{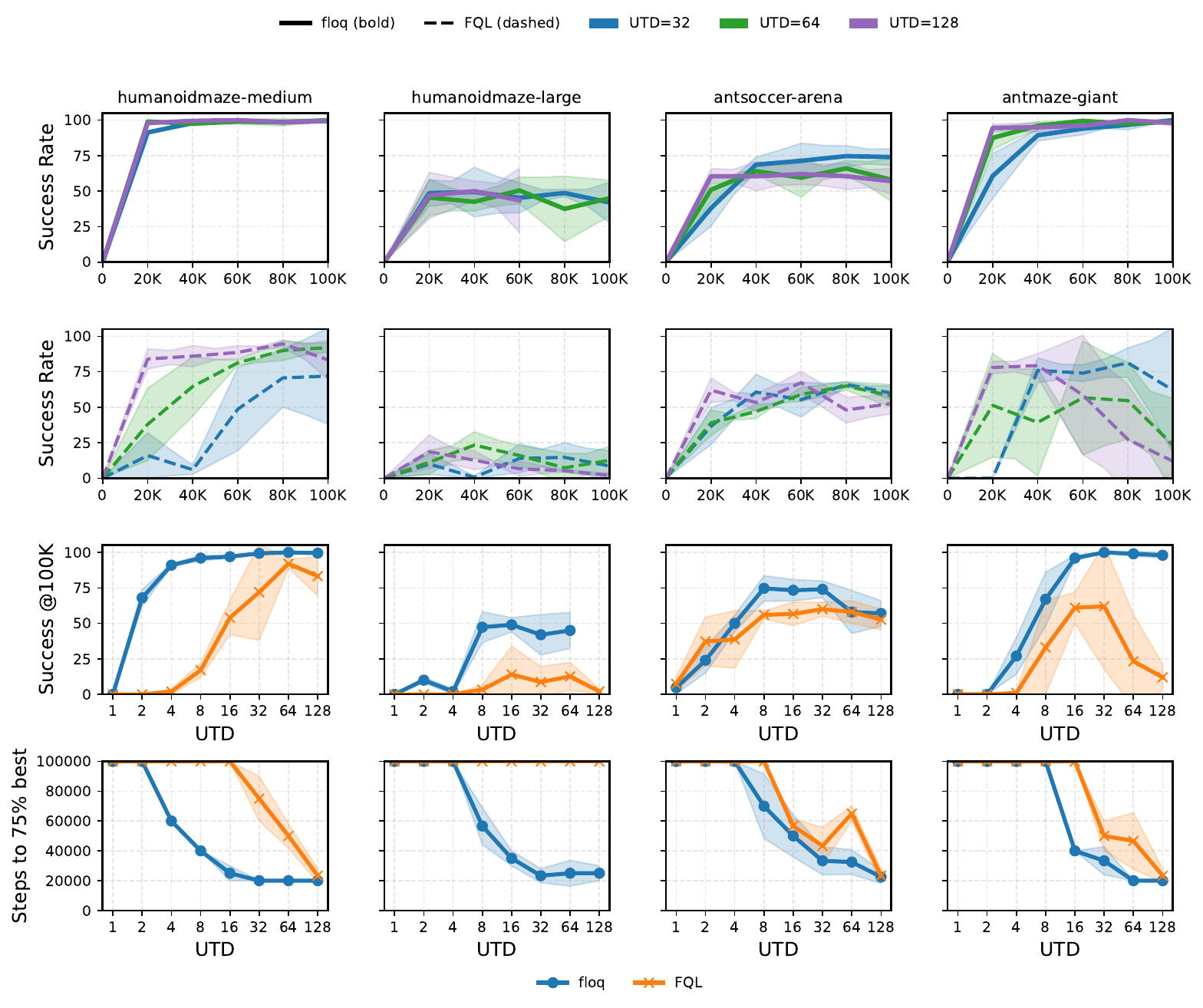}
    \vspace{-0.4cm}
    \caption{\footnotesize
    \textbf{Comparing monolithic and flow-matching critics with RLPD on four hard OG-bench environments.}
    Each column corresponds to a different environment 
    (humanoidmaze-medium, humanoidmaze-large, antsoccer-arena, antmaze-giant).
    \textbf{Row 1:} Performance curves for \texttt{floq} at UTD $\in \{32,64,128\}$.
    \textbf{Row 2:} Performance curves for FQL at UTD $\in \{32,64,128\}$.
    \textbf{Row 3:} Final success rate at $200$K environment steps versus UTD.
    \textbf{Row 4:} Sample efficiency measured as the number of environment steps required
    to reach $75\%$ of the best achieved final performance (capped at a maximum of 100,000).
    Shaded regions indicate standard deviation across random seeds. Observe that \texttt{floq} outperforms monolithic FQL on success rates at high UTD values (row 3) by 2$\times$ on many tasks, stability (rows 1 vs 2), and efficiency (row 4) by 5$\times$ on many tasks.
    }
    \label{fig:rlpd_utd_analysis}
    \vspace{-0.1cm}
\end{figure}

\textbf{Adapting RLPD to use flow policies.} To ensure a fair and competitive comparison, we build on RLPD~\citep{ball2023efficient} and adapt it for our setting of training flow policies needed to make any non-trivial progress on OG-Bench tasks. These modifications substantially improve the performance of both the monolithic baseline and flow-matching critics. First, since we replace the Gaussian policy with a flow-matching policy, we train it using an FQL-style objective. To stabilize training, we incorporate a behavior cloning (BC) regularizer, which we find necessary when using flow policies but not with Gaussian ones. With flow policies and BC regularization, we found that enforcing a strict 50:50 mixture between offline and online data was unnecessary; instead, we use a standard replay buffer initialized with the offline dataset without subscribing to particular mixing proportions. We also omit a ReDQ~\citep{redq}-style large ensemble, as we do not observe significant overestimation in our settings even with high UTD values. Thus, our final recipe is to initialize the replay buffer with offline data, apply BC regularization to the policy, and scale the UTD ratio. Importantly, we apply the same modifications to the non-flow baseline (denoted ``FQL''), as well as the tested flow-matching critic model, ensuring that improvements are attributable to the flow-matching critic rather than to tuning differences. We refer to Appendix~\ref{sec:addl} \& \ref{app:hyperparameters}  for more details.

\textbf{Tuning the BC coefficient $\alpha$.} Because we initialize the replay buffer with offline data and focus on high-UTD training, careful tuning of the behavior cloning (BC) coefficient $\alpha$ is important. For both methods, we tune $\alpha$ over uniformly spaced values in the range $[\alpha_{\text{offline}} - 20, \alpha_{\text{offline}} + 20]$,
where $\alpha_{\text{offline}}$ denotes the best-performing BC coefficient in the purely offline RL setting reported by \citet{agrawalla2025floqtrainingcriticsflowmatching}. We apply the same tuning procedure to baseline FQL for a fair comparison. All  other hyper-parameters are kept to the same values as the offline to online RL setting in \cite{agrawalla2025floqtrainingcriticsflowmatching}, for both \methodname{} and FQL. 

\textbf{Results.} As shown in Figure~\ref{fig:rlpd_utd_analysis}, high-UTD training with \methodname{}, a flow-matching critic, achieves substantially stronger performance at large UTD ratios, with roughly 2$\times$ higher final return and a 5$\times$ improvement in sample efficiency compared to an RLPD baseline on top of FQL, using monolithic critics. We also find that flow-matching critics are more stable and do not destabilize, even at the highest UTD value we tested. These results show the efficacy of flow-matching critics in high-UTD online RL settings.

\vspace{-0.2cm}
\section{Discussion and Perspectives on Future Work}
\label{app:discussion}
\vspace{-0.1cm}

We explain the efficacy of flow-matching critics in off-policy RL by identifying dense supervision over iteratively allocated compute as the key mechanism. Rather than benefiting from distributional modeling, these critics jointly learn a velocity field and an integration procedure, which induces test-time recovery and preserves plastic representations under non-stationary TD targets. This coupling between training and iterative computation enables more robust adaptation to noise, target drift, and feature interventions, helping mitigate common pathologies in high-UTD settings.

\textbf{Future work.} On the practical side, it would be natural to explore alternative parameterizations of the velocity field,  for example, higher-dimensional velocity representations or structured gain mechanisms, and to evaluate their effectiveness in settings that place stronger demands on plasticity, such as continual or lifelong RL with shifting task distributions. From a theoretical perspective, extending our analysis beyond the linear setting to nonlinear function approximation remains an important open problem. More broadly, our results highlight a connection between plasticity under non-stationarity, flow-matching dynamics, and iterative computation. While we study these interactions in the context of TD learning, the underlying principles may extend to other domains where models must repeatedly adapt to evolving targets, such as time-series modeling or adaptive control. 

Next, we discuss a direction for future investigation that connects our findings with language models.

\textbf{Connections to LLM reasoning.} The role of integration steps in flow matching closely parallels the role of reasoning steps in large language models (LLMs) trained to generate chain-of-thought (CoT). Increasing the number of integration steps corresponds to allocating additional test-time compute~\citep{snell2024scaling}, allowing the model to iteratively refine its prediction through a sequence of intermediate states. Crucially, in both settings, iterative computation is beneficial only when it is aligned with the training objective. In flow matching, dense supervision along intermediate interpolants ensures that each step learns a meaningful local correction toward the TD target. Similarly, in LLMs, scaling test-time compute improves only when models are trained via RL to use intermediate reasoning productively~\citep{setlur2025scalingtesttimecomputeverification}.

Expanding on this, each state-action pair induces a new ``task'' at each training step that corresponds to fitting the corresponding TD target, equivalent to a new prompt in the LLM case. A monolithic critic must learn to solve all such tasks through a single static mapping from state-action pairs to Q-values, and must adapt this mapping over training steps. As our experiments showcase, repeatedly fitting evolving targets with a static mapping  leads to plasticity loss since features can overfit to each target. In contrast, flow matching learns an iterative procedure: instead of directly mapping inputs to values, the model learns how to progressively transform an initial estimate toward the target. This effectively equips the network with a reusable operator that can adapt its prediction by allocating additional compute as needed within the budget of $T$ integration steps, rather than by immediately changing its internal representations. A similar phenomenon appears in LLM reasoning: reasoning LLMs that learn to spend more tokens on a prompt than needed generalize better especially on task mixtures consisting of heterogeneous difficulty problems, where any single static predictor does not effectively generalize~\citep{setlur2025opt}.

More broadly, these connections suggest a broader principle underlying our approach tying to bi-level optimization or meta learning. Weight updates provide a slow ``outer-loop'' mechanism for adapting representations across training steps, while iterative computation at inference offers a fast ``inner-loop'' mechanism for adapting predictions within a fixed parameterization. How to best parameterize both the inner and outer loops for best use of compute is an open question. In flow-matching critics, dense supervision along the integration trajectory serves as a surrogate objective that trains the network’s weights to effectively utilize this fast adaptation mechanism. In LLMs, mid-training followed by RL plays an analogous role, since it builds in priors that allow models to chain basic ``skills''~\citep{setlur2025e3learningexploreenables}. But what is the right parameterization for critics and how to spend variable computation are open questions for critics. Formalizing these connections may lead to improved methods across domains.

\vspace{-0.1cm}
\section*{Acknowledgements}
\vspace{-0.1cm}
We thank Zheyuan Hu, Amrith Setlur, Lehong Wu, Max Sobol Mark, Sreyas Venkatraman, Preston Fu, and Paul Zhou for feedback on an earlier version of this paper. We thank all members of the CMU AIRe lab for their support. AK thanks Amrith Setlur, Abhishek Gupta, and Max Simchowitz for helpful discussions. This work is supported by the Office of Naval Research under N00014-24-2206, Schmidt Sciences AI2050 Early Career Fellowship, and an Amazon gift. We thank the TRC program of Google Cloud for TPU resources, babel cluster at CMU, and NCSA Delta for providing GPU resources that enabled this work. This work used the Delta advanced computing and data resource at the National Center for Supercomputing Applications (NCSA) through allocation CIS250548 from the Advanced Cyberinfrastructure Coordination Ecosystem: Services And Support (ACCESS) program.
Delta is supported by the National Science Foundation (award OAC-2005572) and the State of Illinois, and ACCESS is supported by U.S. National Science Foundation grants 2138259, 2138286, 2138307, 2137603, and 2138296.
\bibliography{main}
\newpage

\appendix
\onecolumn

\part*{Appendices}

\section{Additional Experimental Results}
\label{sec:addl}

\textbf{Post-layernorm feature norms for flow-matching critics (\methodname{}) vs monolithic critics (FQL).} We show in Figure \ref{fig:floq_fql_feature_norm_appendix} that while the last hidden layer adapts to the scale of Q-values for both \methodname{} and monolithic critics, the penultimate hidden layer in \methodname{} exhibits a rapid decrease in feature norms compared to FQL. This indicates that \methodname{} learns more adaptive representations in the penultimate hidden layer that are largely decoupled from the Q-value scale.

\textbf{Penultimate hidden layer feature norms comparison for \methodname{} (TD), \methodname{} (SARSA), and \methodname{} (MC).}
Figure~\ref{fig:td_mc_sarsa_appendix} shows that \methodname{} trained with TD learning exhibits the fastest decrease in penultimate hidden layer feature norms, whereas the trends for \methodname{} (SARSA) and \methodname{} (MC) closely resemble those of the monolithic FQL critics. This suggests that flow-matching critics develop more robust representations specifically under non-stationary TD targets. In contrast, with relatively stationary objectives such as regression to Monte-Carlo returns or SARSA, where Q-values are evaluated only for the behavior policy, the targets change less over time. As a result, the model has greater opportunity to absorb the target scale directly into the learned features, reducing resemblance to features learned in TD learning.

\textbf{Frozen features with a single integration step.} We show in Figure \ref{fig:plasticity_train_at_zero_only_appendix} that flow critics with one integration step are more performant and stable than monolithic critics once features are frozen, but fall short in learning features that can support subsequent TD updates than full flow matching critics. This highlights that integration plays a crucial role in preserving feature plasticity. 
\begin{figure*}[h]
\centering
\vspace{0.2cm}
\includegraphics[width=0.99\textwidth]{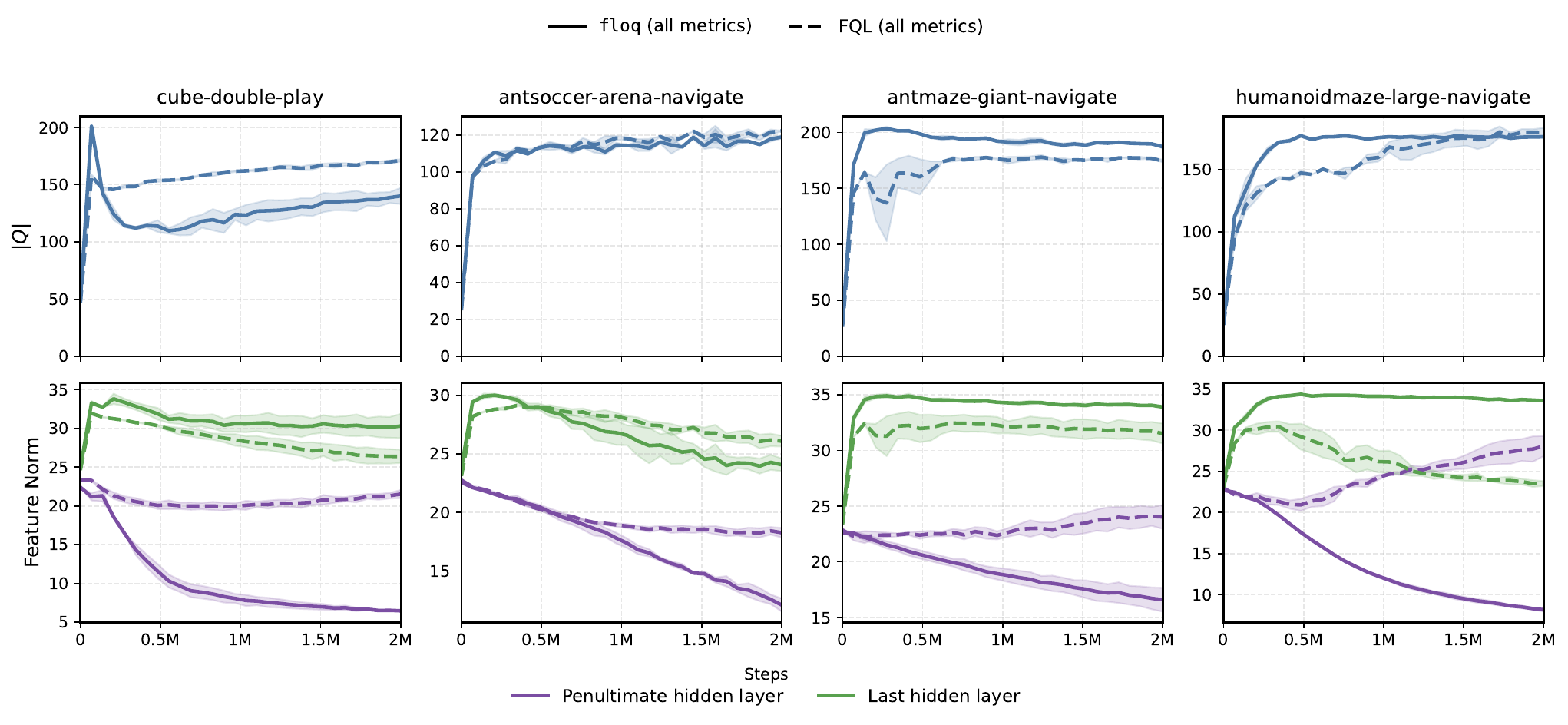}
\vspace{-0.2cm}
\caption{\footnotesize While the last hidden layer adapts to the Q-value scale for both \methodname{} and monolithic critics, the penultimate hidden layer layer in \methodname{} exhibits a qualitatively distinct, rapid decrease in feature norms. This indicates that \methodname{} learns more adaptive representations in the penultimate hidden layer that are largely decoupled from the Q-value scale.}
\vspace{-0.3cm}
\label{fig:floq_fql_feature_norm_appendix}
\end{figure*}

\begin{figure*}[ht]
\centering
\vspace{0.2cm}
\includegraphics[width=0.99\textwidth]{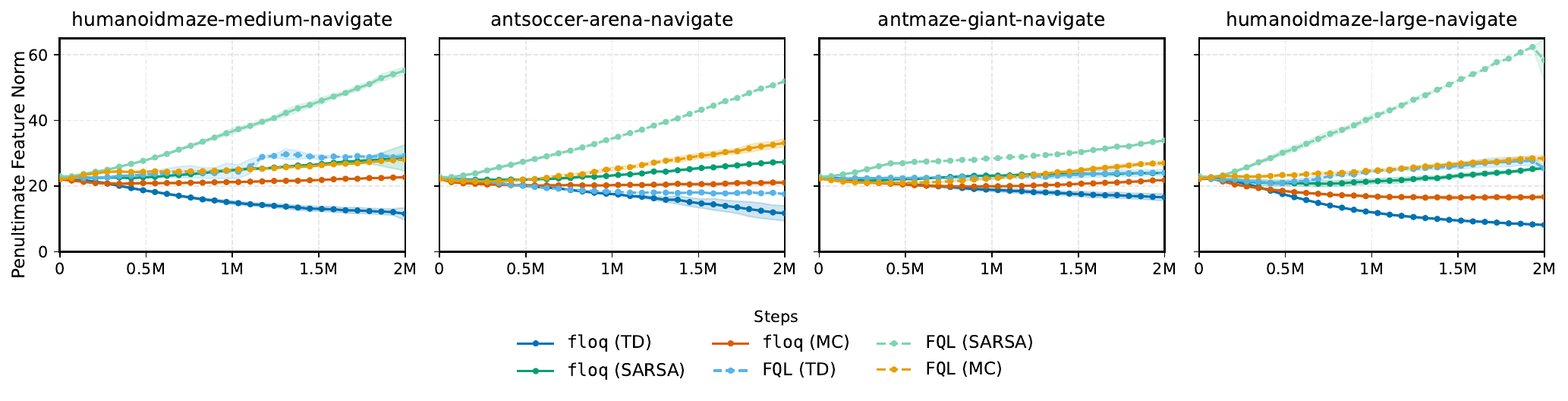}
\vspace{-0.2cm}
\caption{\footnotesize Flow-matching critics trained with TD learning (\methodname{}) shows the fastest decrease in penultimate hidden layer feature norms, whereas \methodname{} (SARSA) and \methodname{} (MC) trends resemble those of the monolithic FQL critics. Thus flow-matching critics develop particularly better representations under non-stationary TD targets.}
\vspace{-0.3cm}
\label{fig:td_mc_sarsa_appendix}
\end{figure*}

\begin{figure*}[t]
\centering
\vspace{0.2cm}
\includegraphics[width=0.99\textwidth]{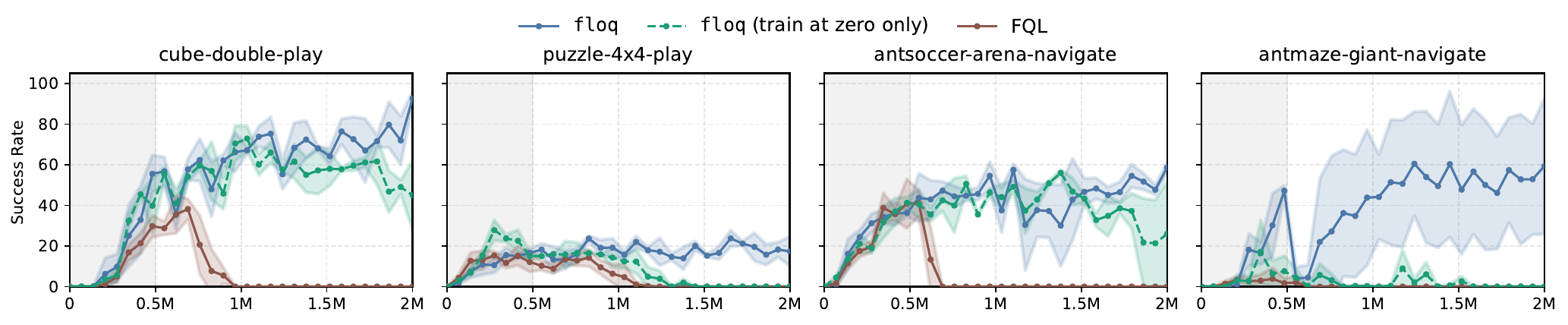}
\vspace{-0.2cm}
\caption{\footnotesize  Flow-matching critics with one integration step are more stable than monolithic critics but less stable than full flow matching. This emphasizes that test-time recovery in the form of integration plays a crucial role in preserving feature plasticity.}
\vspace{-0.1cm}
\label{fig:plasticity_train_at_zero_only_appendix}
\end{figure*}

\textbf{Complete results for RLPD (Section~\ref{sec:app}) with flow-matching and monolithic critics.} Figure~\ref{fig:rlpd_all_utd_grid} shows performance across all update-to-data (UTD) ratios and environments. At very low UTD (e.g., UTD=1), neither method makes meaningful progress. As UTD increases, however, flow matching (\methodname{}) consistently achieves stronger performance. In particular, for UTD values greater than or equal to 8, flow-matching critics outperform monolithic critics in both sample efficiency and final performance across most environments. At very high UTD values (e.g., 64 or 128), monolithic critics occasionally match the final performance of flow matching, but their learning dynamics are generally less stable. Overall, these results corroborate the benefits of flow-matching critics over monolithic ones.

\section{Benchmarks}
\label{app:benchmarks}
Following evaluation protocols from recent work in offline RL~\citep{park2025flow,wagenmaker2025steering,espinosa2025scaling, espinosa2025expressive,dong2025valueflows}, we use the \mbox{\textbf{OGBench}} task suite~\citep{ogbench_park2025} as our main evaluation benchmark (see Figure \ref{fig:ogbench_tasks}).
OGBench provides a number of diverse, challenging tasks across robotic locomotion and manipulation,
where these tasks are generally more challenging than standard D4RL tasks~\citep{d4rl_fu2020}, which have been saturated as of 2024~\citep{rebrac_tarasov2023, d5rl_rafailov2024, park2024value}.
While OGBench was originally designed for benchmarking offline goal-conditioned RL,
we use its reward-based single-task variants (``\texttt{-singletask}'' from \citet{park2025flow}).

We used the \texttt{default} task in each OGBench environment for our analysis and RLPD experiments, following the protocol in recent works ~\citep{park2025flow,wagenmaker2025steering,espinosa2025scaling, espinosa2025expressive,dong2025valueflows}.

\begin{figure}[htbp]
    \centering
    \vspace{-0.2cm}
    \includegraphics[width=0.99\textwidth]{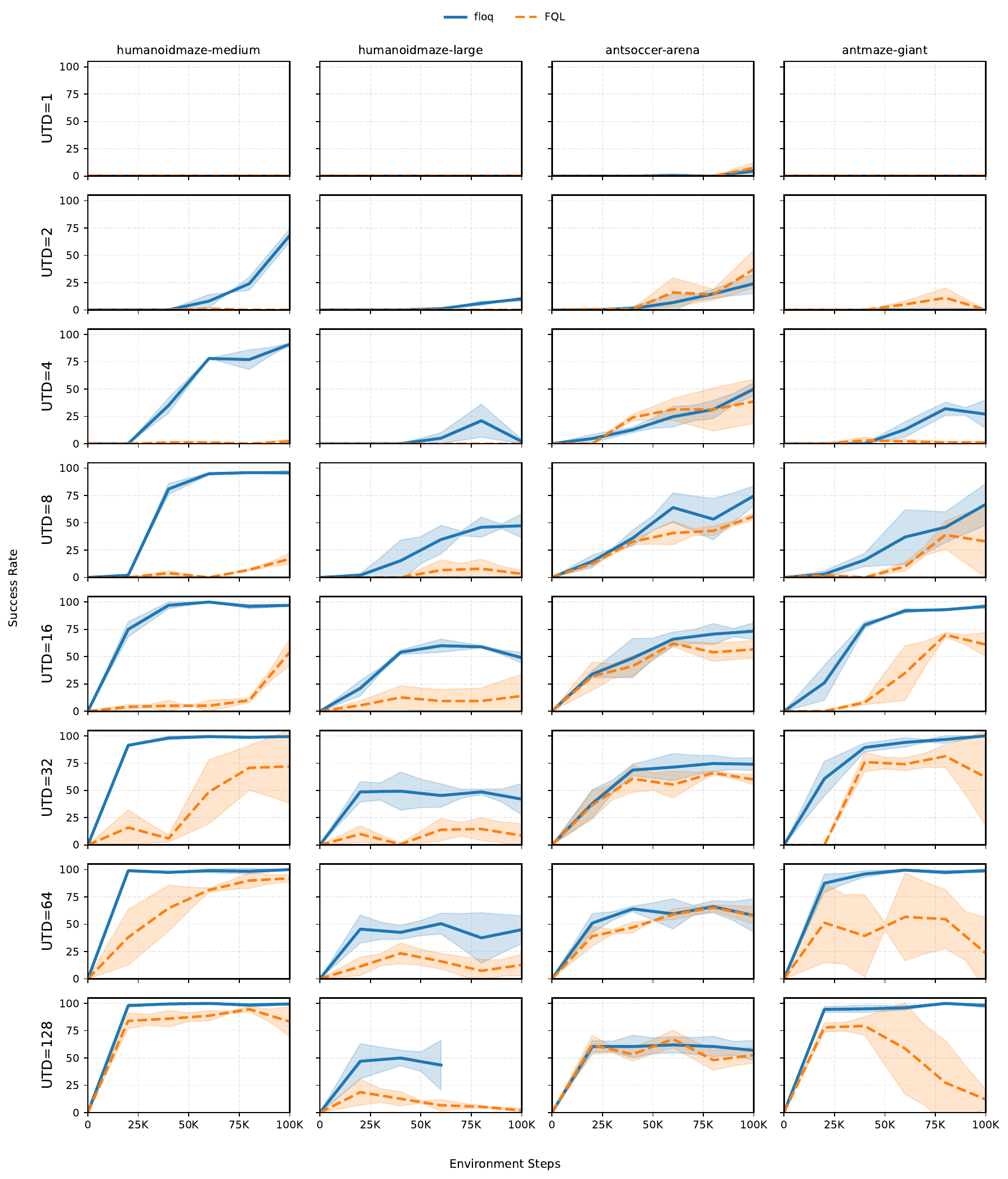}
    \vspace{-0.3cm}
    \caption{ \footnotesize
    \textbf{Performance profiles across all update-to-data (UTD) ratios.}
    Each column corresponds to an environment and each row corresponds to a UTD ratio.
    Solid lines denote \texttt{floq} and dashed lines denote FQL.
    Curves show mean success rate over environment steps and shaded regions
    indicate standard deviation across random seeds.
    }
    \label{fig:rlpd_all_utd_grid}
\end{figure}

\begin{figure*}[ht]
\centering
\vspace{-0.2cm}
\includegraphics[width=0.7\textwidth]{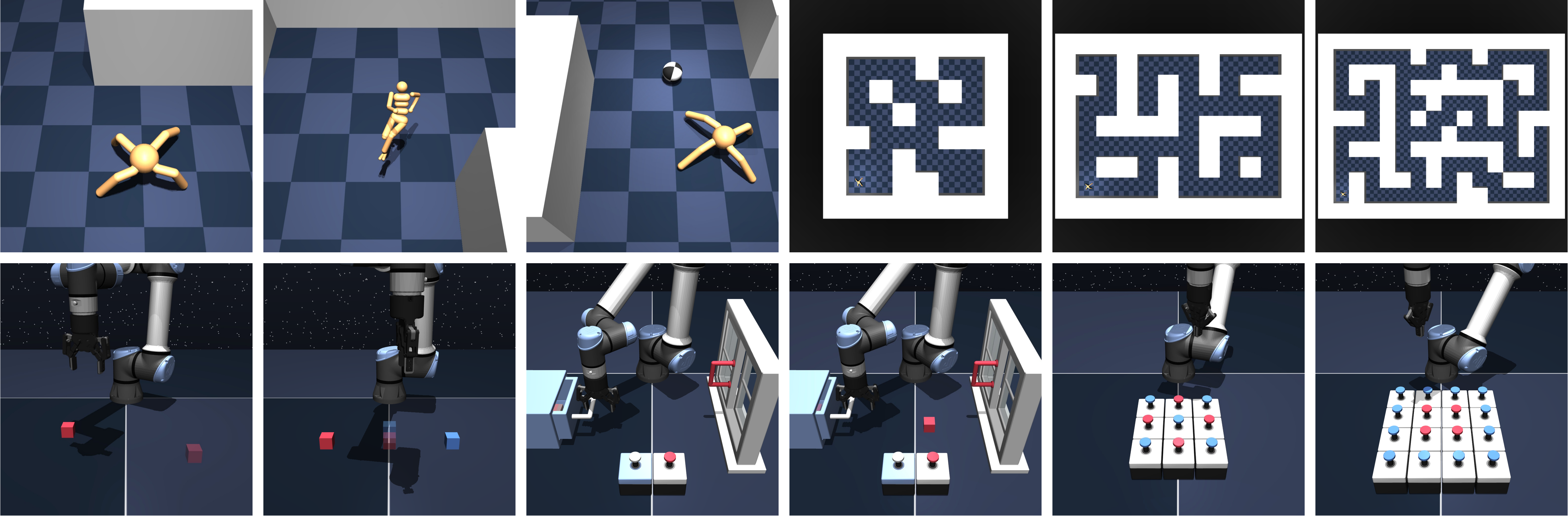}
\vspace{-0.2cm}
\caption{\footnotesize 
 \textbf{OGBench~\citep{ogbench_park2025} domains}. These tasks include high-dimensional state and action spaces, sparse rewards, stochasticity, as well as hierarchical structure.} 
\label{fig:ogbench_tasks}
\vspace{-0.1cm}
\end{figure*}

\section{Experimental Details and Hyperparameters}
\label{app:hyperparameters}

\subsection{Offline RL Analysis Experiments (Section \ref{sec:floq_and_dist_rl}, Section \ref{sec:ttr}, Section \ref{sec:plasticity}).}\label{subsec:offline_rl_analysis_details}

\textbf{Experimental details for robustness to noisy TD supervision (Figure \ref{fig:target_noise_plots}, Section \ref{subsec:ttr_experiments}).} We added $\text{Unif}[-\kappa, \kappa]$ (for $\kappa \in \{0, 4, 8, 16\}$) noise to the TD targets for FQL (monolithic critics) and to the TD \emph{velocity} targets for \methodname{} (flow-matching critics) at each training time-step.

\textbf{Hyperparameters.} We used the hyper-parameters from \citet{agrawalla2025floqtrainingcriticsflowmatching} for both \methodname{} (flow-matching critics) and FQL (monolithic critics)  for all offline RL analysis experiments. 

\vspace{-0.2cm}
\section{Theoretical Results for Test-Time Recovery (Section \ref{subsec:ttr_definition})}
\vspace{-0.2cm}
\label{ttr_proofs}
In this section we show that the $c$-conic condition implies the test-time recovery guarantee stated in the main paper. Specifically, we prove that the stability factor $\beta_K$ in Definition~\ref{def:ttr} decays polynomially with the number of integration steps $K$ when the conic condition is satisfied. Throughout this section we fix a state-action pair $(\bs,\ba)$ and omit it from notation when clear.

\vspace{-0.2cm}
\subsection{Assumptions}
\vspace{-0.2cm}

We impose mild smoothness and boundary assumptions on the learned velocity field.

\begin{tcolorbox}[
  colback=figurebackground,
  boxrule=0pt,
  left=6pt,
  right=6pt,
  top=6pt,
  bottom=6pt
]
\begin{assumption}[Smoothness of the velocity field]
\label{assump:smooth}
The velocity field $v_{\theta^*}(\bz,t)$ is continuously differentiable in $\bz$ and satisfies a local Lipschitz condition:
\begin{align*}
\|v_{\theta^*}(\bz_1,t)-v_{\theta^*}(\bz_2,t)\|
\le \frac{L}{1-t} \|\bz_1-\bz_2\|,
\end{align*}
for all $(\bz,t)$ in the conic region $\mathcal C_K$, for some constant $L > 0$.
\end{assumption}
\end{tcolorbox}

This assumption is standard for neural velocity fields and holds for common architectures with smooth activations. 

Recall the conic region $\mathcal C_{K}(\bs, \ba)$ from Definition ~\ref{def:conic}. For any $\varepsilon_g > 0$, define it's \emph{boundary} regions
\begin{align}
\mathcal{L}_{K, \varepsilon_g}(\bs,\ba) 
:= \Big\{(\bz,t) \Big| 
(1-t) \cdot l + t\cdot l_1(\bs,\ba) \le \bz \le (1-t) \cdot [l + \varepsilon_g]  + t\cdot l_1(\bs,\ba), \;
0 \le t \le 1 - \nicefrac{1}{K}
\Big\},
\end{align}
and
\begin{align}
\!\!\!\!\mathcal{U}_{K, \varepsilon_g}(\bs,\ba) 
:= \Big\{(\bz,t) \Big|
(1-t) \cdot [u - \varepsilon_g] + t\cdot u_1(\bs,\ba) \le \bz \le (1-t) \cdot u  + t\cdot u_1(\bs,\ba), \;
0 \le t \le 1 - \nicefrac{1}{K}
\Big\}.
\end{align}

\begin{tcolorbox}[
  colback=figurebackground,
  boxrule=0pt,
  left=6pt,
  right=6pt,
  top=6pt,
  bottom=6pt
]
\begin{assumption}[Boundary conditions]
\label{assump:boundary_condition}
There exists $\varepsilon_g, \delta_g >  0$ such that 
\begin{align}
v_{\theta^*}(\bz, t | \bs, \ba) &> l_1(\bs, \ba) - l + \delta_g \quad \forall (\bz, t) \in \mathcal L_{K, \varepsilon_g}(\bs, \ba) ~~~~\text{and}~~~~\\
v_{\theta^*}(\bz, t | \bs, \ba) &< u_1(\bs, \ba) - u - \delta_g \quad \forall (\bz, t) \in \mathcal U_{K, \varepsilon_g}(\bs, \ba)
\end{align}
\end{assumption}
\end{tcolorbox}
This assumption enforces that the velocity field points "inwards" near the boundary, which prevents the integration trajectories from escaping the supervision region $\mathcal C_{K}(\bs, \ba)$. Most learned velocity fields will be trained to point inward overall as long as the training loss is not too high, but this condition is needed for our analysis to discard ``adversarial'' velocity fields that might be smooth but may take on adversarial values on specifically chosen points near the boundary, disrupting our analysis. Moreover, such a boundary condition could be enforced in principle as a hard constraint by clipping the network's predictions during training.  

\vspace{-0.2cm}
\subsection{Trajectory Containment in the Supervision Region}
\label{sec:trajectory_containment}
\vspace{-0.2cm}
Our test-time recovery guarantees rely on the $c$-conic condition holding along the inference-time integration trajectory. Since this condition is defined only on the supervision region $\mathcal C_K(\bs, \ba)$, we first establish that the learned dynamics remain inside this region.

\textbf{Intuitively,} the supervision region shrinks over time toward the terminal output interval. Because the learned velocity field transports mass from a broad initial noise distribution to a strictly narrower target distribution, the induced dynamics tend to point inward near the boundary of the region, preventing trajectories from escaping except with small probability due to perturbations. The following statement formalizes this intuition by showing that the conic region is forward-invariant with high probability under the integration dynamics. This containment result ensures that the $c$-conic condition applies throughout the integration trajectory, which will enable the contraction analysis and test-time recovery guarantees established in the subsequent section.

\begin{tcolorbox}[
  colback=figurebackground,
  boxrule=0pt,
  left=6pt,
  right=6pt,
  top=6pt,
  bottom=6pt
]
\begin{lemma}[Trajectory containment]
\label{thm:trajectory_containment}
Assume that the learned velocity field satisfies Assumption~\ref{assump:smooth} (velocity smoothness), Assumption~\ref{assump:boundary_condition} (boundary condition), and the $c$-conic condition from Definition~\ref{def:conic}. Let $\{\tilde\psi^k\}_{k=0}^K$ be the perturbed Euler trajectory:
\begin{align}
\tilde\psi^{k+1}
=
\tilde\psi^k
+
\eta \big(v_{\theta^*}(\tilde\psi^k,t_k)+\xi_k\big),
\quad
\eta=\frac{1}{K}, 
\quad t_k=\frac{k}{K},
\end{align}
with $\tilde\psi^0 \in [l,u]$. Then there exists $\bar\xi > 0$ such that if $\max_{0 \le k \le K-1} \|\xi_k\| \le \bar\xi$, then we have
\begin{align*}
(\tilde\psi^k,t_k) \in \mathcal C_K
\quad \text{for all } k=0,\dots,K.
\end{align*}
\end{lemma}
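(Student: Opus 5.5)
We argue by induction on $k$: if $(\tilde\psi^k,t_k)\in\mathcal C_K$ then $(\tilde\psi^{k+1},t_{k+1})\in\mathcal C_K$. The base case holds since $t_0=0$ and $\tilde\psi^0\in[l,u]$, which is exactly the $t=0$ slice of the cone. Write the moving boundaries as
\[
L(t)=(1-t)l+t\,l_1, \qquad U(t)=(1-t)u+t\,u_1,
\]
so that $L(t_{k+1})-L(t_k)=\eta(l_1-l)$ and $U(t_{k+1})-U(t_k)=\eta(u_1-u)$. Since $\tilde\psi^{k+1}-\tilde\psi^k=\eta(v_{\theta^*}(\tilde\psi^k,t_k)+\xi_k)$, staying inside the cone reduces to two scalar inequalities:
\[
\tilde\psi^k-L(t_k)+\eta\bigl(v_{\theta^*}(\tilde\psi^k,t_k)+\xi_k-(l_1-l)\bigr)\ge 0,
\]
together with the symmetric one for $U$. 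Taking $\bar\xi\le\delta_g$ is the natural first choice of threshold.

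For the lower boundary we split into two cases. If $(\tilde\psi^k,t_k)\in\mathcal L_{K,\varepsilon_g}$, Assumption~\ref{assump:boundary_condition} gives $v_{\theta^*}-(l_1-l)>\delta_g$. Hence $v_{\theta^*}+\xi_k-(l_1-l)>\delta_g-\bar\xi\ge0$, and since $\tilde\psi^k-L(t_k)\ge0$ the inequality holds. Otherwise $\tilde\psi^k-L(t_k)>(1-t_k)\varepsilon_g$, and we need the step not to overshoot, i.e.
\[
\eta\bigl|v_{\theta^*}(\tilde\psi^k,t_k)+\xi_k-(l_1-l)\bigr|\le(1-t_k)\varepsilon_g .
\]
Because $t_k\le 1-1/K$, we have $(1-t_k)\ge\eta$, so it suffices that $|v_{\theta^*}|+\bar\xi+|l_1-l|\le\varepsilon_g$. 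The upper boundary is handled symmetrically using $\mathcal U_{K,\varepsilon_g}$.

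The main obstacle is this interior case. A bound on $|v_{\theta^*}|$ on $\mathcal C_K$ is not obviously available, and the ratio $\varepsilon_g/\eta$ has to be controlled. I would get a velocity bound by combining Assumption~\ref{assump:smooth} with the boundary values: for $\bz$ in the slice at time $t$, the Lipschitz constant is $L/(1-t)$ and the slice width is $(1-t)(u-l)+t(u_1-l_1)\le(1-t)(u-l)$ up to the $t(u_1-l_1)$ term. This gives $|v_{\theta^*}(\bz,t)-v_{\theta^*}(\bz_b,t)|\le L(u-l)+O(\cdot)$, which is bounded uniformly in $t$, relative to a boundary point $\bz_b$.

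The $c$-conic condition sharpens this. Since $v$ is decreasing in $\bz$, $v$ on a slice is sandwiched between its values at the two boundary strips, and those values are pinned (from one side) by Assumption~\ref{assump:boundary_condition}. To turn this into a two-sided bound I would assume $v_{\theta^*}$ is bounded on the compact closure of $\mathcal C_K$ by some $V$, which follows from continuity. Then the step bound holds whenever $K\ge (V+\bar\xi+|l_1-l|+|u_1-u|)/\varepsilon_g$.

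If that requirement on $K$ is undesirable, the alternative is to redefine the boundary strips with width proportional to $\eta$. The cleanest statement is therefore: choose $\bar\xi<\delta_g$, and note that the step size is small enough relative to $(1-t_k)\varepsilon_g$ for $K$ large. I would state this largeness requirement explicitly, since the lemma leaves it implicit.

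Finally, the terminal step $k=K-1\to K$ lands at $t_K=1$, which lies outside the range $t\le1-1/K$ of $\mathcal C_K$. I would interpret containment there as $\tilde\psi^K\in[l_1,u_1]$, which follows from the same one-step argument applied with the closed cone extended to $t=1$.
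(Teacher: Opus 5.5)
You handle the strip case exactly as the paper does, and the paper's proof stops there. You are right that this leaves a case open: a point outside the lower strip (in the interior, or in the upper strip) can jump below $b_L(t_{k+1})$ in a single Euler step (writing $b_L,b_U$ for your $L(t),U(t)$). You are also right that $t_K=1$ lies outside $\mathcal C_K$.

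\textbf{Your patch for the interior case does not work.} In that case the guaranteed room is $(1-t_k)\varepsilon_g=(K-k)\,\eta\,\varepsilon_g$. At $k=K-1$ the no-overshoot requirement is therefore $|v_{\theta^*}+\xi_k-(l_1-l)|\le\varepsilon_g$, with no dependence on $K$; your own reduction says exactly this. Your condition $K\ge (V+\bar\xi+|l_1-l|+|u_1-u|)/\varepsilon_g$ compares the step with $\varepsilon_g$ rather than with $(1-t_k)\varepsilon_g$, so it only controls the early steps. Near $t=1-1/K$ the step and the room both scale like $1/K$, so taking $K$ large does not help there. Nothing in the assumptions makes $|v_{\theta^*}|+\bar\xi+|l_1-l|\le\varepsilon_g$ hold; it would even force $|l_1-l|<\varepsilon_g$.

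The bound $V$ is also not uniform in $K$. Assumption~\ref{assump:smooth} over a slice gives oscillation $\tfrac{L}{1-t}\bigl[(1-t)(u-l)+t(u_1-l_1)\bigr]=L(u-l)+\tfrac{t}{1-t}L(u_1-l_1)$, which is of order $KL(u_1-l_1)$ on $\mathcal C_K$. Getting ``bounded by continuity on the closure'' would need continuity in $t$, which is not assumed. Shrinking the strips to width proportional to $\eta$ changes Assumption~\ref{assump:boundary_condition} rather than proving the lemma.

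\textbf{No argument can close this case from the stated hypotheses alone, because the statement is false without a step-size condition.} Take $K=2$, $l=0$, $u=1$, $l_1=0.4$, $u_1=0.6$, and $v_{\theta^*}(\bz,t)=5-10\bz$. On $\mathcal C_2$ (where $t\le 1/2$) this field:
\begin{itemize}
\item is $c$-conic with $c=1/2$;
\item satisfies Assumption~\ref{assump:smooth} with $L=10$;
\item satisfies Assumption~\ref{assump:boundary_condition} with $\varepsilon_g=0.1$, $\delta_g=0.05$, since the lower strip has $\bz\le 0.25$, where $v_{\theta^*}\ge 2.5$, and the upper strip has $\bz\ge 0.75$, where $v_{\theta^*}\le -2.5$.
\end{itemize}
Yet from the interior point $\tilde\psi^0=0.7$ with $\xi_0=0$ one gets $\tilde\psi^1=0.7-1=-0.3<b_L(1/2)=0.2$.

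The missing ingredient is a no-overshoot condition such as $L\le 1$. This is compatible with $c\le L$ and $c<1$; the straight-line field $(\mu-\bz)/(1-t)$ toward a point target has exactly $L=1$. Under it, $\Phi_k(\bz):=\bz+\eta\,v_{\theta^*}(\bz,t_k)$ satisfies $\Phi_k'\ge 1-\tfrac{L}{K-k}\ge 0$, so $\Phi_k$ is nondecreasing on the slice and only the slice endpoints need checking. These endpoints lie in the strips, so Assumption~\ref{assump:boundary_condition} gives $\Phi_k(b_L(t_k))>b_L(t_{k+1})+\eta\delta_g$ and $\Phi_k(b_U(t_k))<b_U(t_{k+1})-\eta\delta_g$. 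Hence any $\bar\xi\le\delta_g$ works at every step $k\le K-1$, including the last step into $[l_1,u_1]$, and $\varepsilon_g$ plays no role.
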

\end{tcolorbox}

\begin{proof}
Define the time-step dependent boundaries
\begin{align*}
b_L(t) &= (1-t) \cdot l + t \cdot l_1, \\
b_U(t) &= (1-t) \cdot u + t \cdot u_1.
\end{align*}

Observe that
\begin{align*}
b_L(t_{k+1}) &= b_L(t_k) + \eta \cdot (l_1 - l), \\
b_U(t_{k+1}) &= b_U(t_k) + \eta \cdot (u_1 - u).
\end{align*}

\textbf{Lower boundary.} We apply an induction argument.
Assume that $\tilde\psi^k \ge b_L(t_k)$. Then
\begin{align*}
\tilde\psi^{k+1}-b_L(t_{k+1})
&=
\tilde\psi^k - b_L(t_k)
+
\eta\Big(v_{\theta^*}(\tilde\psi^k,t_k)
-
(l_1-l)
+
\xi_k
\Big).
\end{align*}

By Assumption~\ref{assump:boundary_condition}, there exists $\delta_g>0$ such that
\begin{align*}
v_{\theta^*}(\bz,t) \ge (l_1-l) + \delta_g
\quad \text{for } (\bz, t) \in \mathcal{L}_{K,\varepsilon_g}.
\end{align*}

Thus whenever $\tilde\psi^k$ lies in the lower boundary strip,
\begin{align*}
\tilde\psi^{k+1}-b_L(t_{k+1})
\ge
\tilde\psi^k - b_L(t_k)
+
\eta(\delta_g - \|\xi_k\|).
\end{align*}

If $\|\xi_k\| \le \bar\xi < \delta_g$, the increment is positive and the trajectory cannot cross the lower boundary.

\textbf{Upper boundary.}
We apply a similar inductive argument here:
\begin{align*}
\tilde\psi^{k+1}-b_U(t_{k+1})
&=
\tilde\psi^k - b_U(t_k)
+
\eta\Big(
v_{\theta^*}(\tilde\psi^k,t_k)
-
(u_1-u)
+
\xi_k
\Big).
\end{align*}

By Assumption~\ref{assump:boundary_condition}, there exists $\delta_g>0$ such that
\begin{align*}
v_{\theta^*}(\bz,t) \le (u_1-u) - \delta_g
\quad \text{on }~~ (\bz, t) \in \mathcal U_{K,\varepsilon_g}.
\end{align*}

Thus if $\|\xi_k\| \le \bar\xi < \delta_g$, the trajectory cannot cross the upper boundary. Since $\tilde\psi^0 \in [l,u]$, induction implies
\begin{align*}
(\tilde\psi^k,t_k) \in \mathcal C_K
\quad \forall k.
\end{align*}
This means that as long as the perturbations $\bar{\xi}$ are not too large, the trajectories stay within the conic region $\mathcal{C}_K$, thereby proving our result.
\end{proof}

\vspace{-0.2cm}
\subsection{Main Result}
\vspace{-0.2cm}

We now prove that the $c$-conic condition induces polynomial decay of perturbations.

\begin{tcolorbox}[
  colback=figurebackground,
  boxrule=0pt,
  left=6pt,
  right=6pt,
  top=6pt,
  bottom=6pt
]
\begin{theorem}[Polynomial Test-Time Recovery]
\label{thm:conic_implies_ttr_strong}
Assume that the learned velocity field satisfies Assumption~\ref{assump:smooth} (velocity smoothness), Assumption~\ref{assump:boundary_condition} (boundary condition), and the $c$-conic condition (Definition~\ref{def:conic}) for some $0 < c < 1$.
    % \begin{align*}
    % \frac{\partial v_{\theta^*}(\bz,t)}{\partial \bz}
    % \le
    % -\frac{c}{1-t}
    % \quad \forall (\bz,t)\in\mathcal C_K,
    % \end{align*}
    % for some $0<c<1$.
Let $\{\psi^k\}$ and $\{\tilde\psi^k\}$ be the unperturbed and perturbed trajectories with step size $\eta=1/K$. If $\max_k \|\xi_k\| \le \bar\xi$, so that both trajectories remain in $\mathcal C_K$ (Lemma~\ref{thm:trajectory_containment}), then
\begin{align*}
\|\tilde\psi^K - \psi^K\|
\le
C K^{-c}
\max_k \|\xi_k\|,
\end{align*}
for a constant $C>0$ independent of $K$. This implies that $\beta_K = O(K^{-c})$.
% In particular,
% \begin{align*}
% \beta_K = O(K^{-c}).
% \end{align*}
\end{theorem}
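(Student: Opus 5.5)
We track the error sequence $\Delta_k := \tilde\psi^k - \psi^k$ and derive a one-step linear recursion for it. Subtracting the two Euler updates gives
\begin{align*}
\Delta_{k+1} = \Delta_k + \eta\big(v_{\theta^*}(\tilde\psi^k,t_k) - v_{\theta^*}(\psi^k,t_k)\big) + \eta\,\xi_k ,\qquad \Delta_0 = 0 .
\end{align*}
By Lemma~\ref{thm:trajectory_containment}, applied once with the given perturbations and once with all $\xi_k=0$, both trajectories stay in $\mathcal C_K$. The region is convex in $\bz$ for each fixed $t$, so the segment joining $\psi^k$ and $\tilde\psi^k$ also lies in $\mathcal C_K$. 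The mean value theorem (using the $C^1$ part of Assumption~\ref{assump:smooth}) then lets us write the velocity difference as $g_k\Delta_k$, where $g_k = \partial_\bz v_{\theta^*}(\zeta_k,t_k)$ at some intermediate point $\zeta_k$. Two bounds on $g_k$ follow: the $c$-conic condition gives $g_k\le -c/(1-t_k)$, and Assumption~\ref{assump:smooth} gives $|g_k|\le L/(1-t_k)$. Since $1-t_k=(K-k)/K$ and $\eta=1/K$, this yields
\begin{align*}
\Delta_{k+1} = a_k\Delta_k + \eta\xi_k,\qquad a_k := 1+\eta g_k \in \Big[1-\tfrac{L}{K-k},\; 1-\tfrac{c}{K-k}\Big].
\end{align*}

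Unrolling the recursion gives $\Delta_K = \eta\sum_{j=0}^{K-1}\big(\prod_{k=j+1}^{K-1} a_k\big)\xi_j$. Everything then reduces to bounding $|a_k|$. The upper bound $a_k\le 1-c/(K-k)$ is immediate. For the lower side we need $a_k\ge -(1-c/(K-k))$, or at least $|a_k|\le 1$ with the contraction kicking in away from the last few steps. This is the main obstacle. The Lipschitz bound allows $a_k$ to become very negative when $K-k$ is small, because the factor $L/(K-k)$ can exceed $2$. Note that the region only contains $t\le 1-1/K$, so $K-k\ge1$.

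I would handle this by splitting the indices. For $K-k\ge M:=\lceil L/(2-c)\rceil$, we have $|a_k|\le 1-c/(K-k)$. The last $M$ steps form a block whose length is independent of $K$, and on it each factor satisfies $|a_k|\le L$, so the block's product is at most a constant $C_0 = \max(1,L)^M$. Writing $n=K-k$, for $j$ with $K-j>M$ we get
\begin{align*}
\Big|\prod_{k=j+1}^{K-1}a_k\Big| \le C_0\prod_{n=M+1}^{K-j-1}\Big(1-\frac{c}{n}\Big) \le C_0' \Big(\frac{M}{K-j}\Big)^{c},
\end{align*}
using $\log(1-c/n)\le -c/n$ together with the harmonic sum estimate. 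Indices with $K-j\le M$ are at most $M$ terms, each bounded by $C_0$.

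Summing, $|\Delta_K|\le \eta\max_j|\xi_j|\,\big(C_0 M + C_0' M^c\sum_{m=1}^{K} m^{-c}\big)$. Since $0<c<1$, we have $\sum_{m\le K}m^{-c}\le K^{1-c}/(1-c)$, so $|\Delta_K|\le (C_1/K + C_2 K^{-c})\max_j|\xi_j| \le C K^{-c}\max_j|\xi_j|$, with $C$ depending only on $c$ and $L$. This gives $\beta_K=O(K^{-c})$. The only delicate point is the constant-length final block where the step size is no longer small relative to $1/(1-t)$. If the lower Lipschitz control there turns out to be too weak, the fallback is to assume $L\le 2-c$, so that every $|a_k|\le 1-c/(K-k)$ and the constant-length block disappears.
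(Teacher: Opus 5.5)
Your proof is correct and has the same skeleton as the paper's: the error recursion, the mean value theorem combined with the conic bound, unrolling, $\prod_{m\le M}(1-c/m)=\Theta(M^{-c})$, and $\sum_{m\le K}m^{-c}=\Theta(K^{1-c})$. Where you differ is the key step, and your treatment is the sounder one.

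The paper uses only the one-sided conic bound to write $\delta_{k+1}\le(1-\tfrac{c}{K-k})\delta_k+\eta\xi_k$. It then unrolls this as an identity with multipliers $1-\tfrac{c}{K-i}$, and never invokes the Lipschitz part of Assumption~\ref{assump:smooth}. That inequality is only valid when $\delta_k\ge 0$. The unrolled bound also tacitly needs $|1+\eta g_k|\le 1-\tfrac{c}{K-k}$, which can fail near $t=1$. There $\eta$ is comparable to $1-t_k$, and the multiplier can be as low as $1-\tfrac{L}{K-k}$, which is below $-1$ once $K-k<L/2$.

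Three parts of your argument make the unrolled estimate legitimate:
\begin{itemize}
\item your two-sided bound $a_k\in[1-\tfrac{L}{K-k},\,1-\tfrac{c}{K-k}]$;
\item the observation that $|a_k|\le 1-\tfrac{c}{K-k}$ once $K-k\ge M$ (your choice of $M$ suffices);
\item absorbing the $K$-independent terminal block into $C_0=\max(1,L)^M$.
\end{itemize}
They also show that $C$ must depend on $L$: the conic condition alone only gives $a_{K-1}\le 1-c$ and places no lower bound on it.

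Two further steps make explicit what the paper uses silently. You use convexity of each time slice of $\mathcal C_K$, so the intermediate point $\zeta_k$ lies in the region. You also apply Lemma~\ref{thm:trajectory_containment} with $\xi\equiv 0$ to contain the unperturbed trajectory.

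Two small remarks. On the terminal block the correct bound is $|a_k|\le\max(1,L)$ rather than $|a_k|\le L$, though your $C_0$ already accounts for this. And the fallback assumption $L\le 2-c$ is unnecessary, since the block argument goes through as written.
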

\end{tcolorbox}

\begin{proof}
Define $\delta_k := \tilde\psi^k - \psi^k$. Subtracting the updates,
\begin{align*}
\delta_{k+1}
&=
\delta_k
+
\eta
\Big(
v_{\theta^*}(\tilde\psi^k,t_k)
-
v_{\theta^*}(\psi^k,t_k)
\Big)
+
\eta \xi_k.
\end{align*}
By the mean value theorem, there exists $\zeta_k$ between $\tilde\psi^k$ and $\psi^k$ such that
\begin{align*}
v_{\theta^*}(\tilde\psi^k,t_k)
-
v_{\theta^*}(\psi^k,t_k)
=
\partial_\bz v_{\theta^*}(\zeta_k,t_k)
\, \delta_k.
\end{align*}

Since both trajectories lie in $\mathcal C_K$, the $c$-conic condition gives
\begin{align*}
\frac{\partial}{\partial \bz} v_{\theta^*}(\bz,t_k) \big|_{\bz = \zeta_k}
\le
-\frac{c}{1-t_k}.
\end{align*}
Thus
\begin{align*}
\delta_{k+1}
\le
\left(
1-\frac{\eta c}{1-t_k}
\right)\delta_k
+
\eta \xi_k.
\end{align*}
Since $t_k = k/K$,
\begin{align*}
\frac{\eta}{1-t_k}
=
\frac{1/K}{(K-k)/K}
=
\frac{1}{K-k}.
\end{align*}
Hence
\begin{align*}
\delta_{k+1}
\le
\left(
1-\frac{c}{K-k}
\right)\delta_k
+
\frac{1}{K}\xi_k.
\end{align*}
Unrolling the recursion yields
\begin{align*}
\delta_K
=
\sum_{j=0}^{K-1}
\left(
\prod_{i=j+1}^{K-1}
\left(1-\frac{c}{K-i}\right)
\right)
\frac{1}{K}\xi_j.
\end{align*}
Changing variables $m=K-i$, the product becomes
\begin{align*}
\prod_{m=1}^{K-j-1}
\left(1-\frac{c}{m}\right).
\end{align*}
Using the classical guarantee that
\begin{align*}
\prod_{m=1}^{M}
\left(1-\frac{c}{m}\right)
=
\Theta(M^{-c}),
\end{align*}
we obtain
\begin{align*}
\left|
\prod_{i=j+1}^{K-1}
\left(1-\frac{c}{K-i}\right)
\right|
\le
C (K-j)^{-c}.
\end{align*}
\begin{align*}
\text{Therefore,}~~~~
|\delta_K|
\le
\frac{C}{K}
\sum_{j=0}^{K-1}
(K-j)^{-c}
\max_k |\xi_k|.
\end{align*}
\begin{align*}
\text{Since}~~~\sum_{m=1}^{K} m^{-c}
=
\Theta(K^{1-c}),
\end{align*}
\begin{align*}
\text{we conclude}
~~~|\delta_K|
\le
C K^{-c}
\max_k |\xi_k|.
\end{align*}
This proves the result.
\end{proof}

\vspace{-0.3cm}
\subsection{Why Flow Matching Encourages the Conic Condition}
\label{sec:why_conic_holds}
\vspace{-0.1cm}

We now argue that the $c$-conic condition should be naturally satisfied in practice.

\textbf{Training objective.} For computing TD targets, the implementation of flow-matching critics we use (\texttt{floq} from \citet{agrawalla2025floqtrainingcriticsflowmatching}) generates $n$ initial noise samples $z \sim \mathrm{Unif}[l,u]$, integrates the target velocity field, and averages the resulting terminal target values. The critic is then trained using a flow-matching objective toward a Dirac distribution centered at the TD target. This induces a supervision signal that enforces transport from a broad initial distribution to a concentrated (Dirac-delta) terminal distribution.

\textbf{Implicit contraction.} Flow matching learns a velocity field that transports mass from the initial noise interval $[l,u]$ to a narrower target region $[l_1(\bs,\ba),u_1(\bs,\ba)]$. Since the target distribution has strictly smaller support, the learned flow must contract distances between nearby particles along the trajectory as long as it can be fit. As \citet{agrawalla2025floqtrainingcriticsflowmatching} notes, the values of TD error attained by the flow-matching critic are comparable to that of a standard critic implying no hardness of fitting the critic.
In the space of one-dimensional scalar values, such transport necessarily requires satisfying 
\begin{align*}
\frac{\partial v_{\theta^*}(\bz,t)}{\partial \bz} < 0
\end{align*}
over most of the supervision region, otherwise integration trajectories could expand and amplify errors.

\textbf{Time-varying contraction rate.} The flow interpolant linearly shrinks the admissible region for $\bz$ as $t$ increases. To ensure particles remain within this shrinking region, the required contraction strength must scale inversely with the remaining interval width, which is proportional to $(1-t)$. This yields a contraction rate of order $(1-t)^{-1}$, matching the $(c,\delta)$-conic condition.

\textbf{Dense supervision along the trajectory.} Unlike standard regression objectives that only constrain terminal outputs, flow matching supervises the velocity field at all intermediate times and across a broad distribution of interpolant inputs $\bz$. This provides local corrective signals through the conic region and suppresses expansion directions, making violations of the contraction condition rare.

\textbf{Effect of averaging over multiple noise samples.} Averaging terminal values over multiple initial noise samples further stabilizes the learned transport map by reducing variance in the terminal TD target. This improves the probability that the learned velocity field remains in the conic region. In fact, \textcolor{lightblue}{\textbf{\emph{the conic condition perhaps also implies that performing distributional RL with flow matching is perhaps less likely to result in convergent dynamics}}} compared to fitting an expected target, and may not enjoy TTR. 

\textbf{Conclusion.} Together, contraction of TD targets, dense trajectory supervision, and noise averaging encourage the learned velocity field to contract trajectories at a rate proportional to $(1-t)^{-1}$ over most of the conic region, implying that the $c$-conic condition holds with high probability.

\vspace{-0.2cm}
\subsection{Possible Extension of our Analysis by Relaxing the Conic and Boundary Conditions}
\label{sec:high_prob_relaxation}
\vspace{-0.2cm}

While the $c$-conic condition is expected to be satisfied in practice, throughout our analysis, the $c$-conic condition (Definition~\ref{def:conic}) and the boundary condition (Assumption~\ref{assump:boundary_condition}) were assumed to hold \emph{uniformly} over the entire supervision region $\mathcal C_K$. This uniform formulation simplifies the exposition and enables a clean deterministic contraction argument. That said, we can relax this condition to \emph{a high probability} condition over the supervision region. In particular, we can consider a relaxed setting in which
\begin{itemize}[itemsep=5pt]
    \item the $c$-conic inequality
    \begin{align*}
    \frac{\partial}{\partial \bz} v_{\theta^*}(\bz,t)
    \le
    -\frac{c}{1-t}
    \end{align*}
    holds for $(\bz,t)$ drawn from the supervision distribution with probability at least $1-\delta$, and
    \item the inward-pointing boundary condition holds with probability at least $1-\delta'$ over the boundary.
\end{itemize}
Under such a relaxation, our deterministic trajectory containment (Lemma~\ref{thm:trajectory_containment}) and contraction arguments (Theorem~\ref{thm:conic_implies_ttr_strong}) no longer apply verbatim, since adversarially arranged violations of the inequalities could, in principle, destroy contraction. For example, one can construct pathological velocity fields that satisfy the inequalities on a $(1-\delta)$ fraction of the region but systematically push almost all initial $\bz$ values toward the boundary through the exceptional low probability set, thereby preventing test-time recovery.

Nevertheless, such worst-case constructions are highly structured. In a probabilistic view of the hypothesis space, restricted to velocity fields that satisfy the $(1-\delta)$-probability versions of the conic and boundary conditions, vector fields that consistently steer typical trajectories through the exceptional set form a very small subset. Intuitively, doing so would require coordinated alignment between:
\begin{enumerate}
    \item the geometry of the exceptional set where contraction fails, and
    \item the discrete Euler trajectory induced by the learned flow.
\end{enumerate}
A more refined analysis could therefore proceed by: \textbf{(1)} showing that, with high probability over initialization $\bz \sim \mathrm{Unif}[l,u]$, the induced trajectory spends only a small fraction of its steps in regions where the conic inequality fails, and \textbf{(2)} controlling the cumulative effect of these rare expansion steps via a martingale or concentration argument.
One would then obtain a \emph{high-probability test-time recovery guarantee}, in which
\begin{align*}
\|\tilde\psi^K - \psi^K\|
\le
C K^{-c'} \max_k \|\xi_k\|
\end{align*}
holds with probability at least $1-\delta''$ over the initial noise and the trajectory, for suitably related $\delta''$.

We omit the formal argument and proof of this probabilistic contraction analysis, as it introduces substantial technical overhead without materially changing the conceptual picture. The key underlying point is that uniform contraction is sufficient but not necessary: it is enough that contraction holds along \emph{most} trajectories with high probability. As long as we can avoid adversarial fields, we should still enjoy concentration as we see in our experiments.

\section{Analysis of Feature Plasticity in a Linear Model (Section~\ref{subsec:toy_setting})}
\label{app:toy_setting}

In this appendix we provide proofs for Section~\ref{subsec:toy_setting} that discusses the mechanism behind feature plasticity in flow-matching critics using a linear setting. Our analysis revolves around evaluating the gradient flow dynamics of monolithic and flow-matching predictors that we formally describe below over the course of one gradient step of fitting a training TD target. This gradient flow model in a linear setting is akin to several prior works~\citep{arora2018optimization,du2018gradient}. While this model is simple, and makes assumptions (linearity, gradient flow), we find it to be sufficient to build a mental model of differences in behavior of the two types of architectures. 
In particular, we compare: \textbf{(1)} a \textbf{monolithic linear predictor} trained by a gradient flow with \textbf{(2)} a \textbf{linear Euler flow-matching model} trained with slice-wise supervision. Both represent linear functions, but their learning dynamics differ fundamentally due to dense supervision and the use of integration to compute the output allows the model to track new targets without needing to change features.
We summarize the parameterizations, resulting functions, training loss, and gradient flows for both type of critics in Table~\ref{tab:mono_vs_flow_linear_appendixE}.

\begin{table}[t]
\centering
\small
\setlength{\tabcolsep}{6pt}
\renewcommand{\arraystretch}{1.18}

\begin{tabularx}{\linewidth}{@{}>{\raggedright\arraybackslash}p{0.36\linewidth} >{\raggedright\arraybackslash}p{0.64\linewidth}@{}}
\toprule
\textbf{Monolithic critic} & 
\textbf{Flow-matching critic (Euler integration; slice-wise supervision)} \\
\midrule

% --------------------------------------------------
% FUNCTION
\multicolumn{2}{@{}l}{\textbf{Function}} \\[2pt]

\begin{minipage}[t]{\linewidth}\vspace{2pt}
Single linear predictor:
\[
f_{\mathrm{mono}}(\bx;m)=w(m)^\top \bx.
\]
\end{minipage}
&
\begin{minipage}[t]{\linewidth}\vspace{2pt}
$T$-step Euler recursion (integration index $t$, training step $m$):
\[
\!\!\!\!\!\!s_{i+1}(\bx,\bz;m)
=
(1+h\,v_i(m))\,s_i(\bx,\bz;m)
+
h\,u_i(m)^\top \bx,~~~
s_1=\bz.
\]

Output $\hat y(\bx,\bz;m)\coloneqq s_T(\bx,\bz;m)$.  
If $\E[\bz]=0$, the mean predictor is
\[
\!\!\!\!\!f_{\mathrm{FM}}(\bx;m)
=
\sum_{\ell=1}^{T-1}\beta_\ell(m)\,u_\ell(m)^\top \bx; ~~
\beta_\ell(m)
=
h\prod_{j=\ell+1}^{T-1}\bigl(1+h\,v_j(m)\bigr).
\]
\end{minipage}
\\[6pt]
\midrule

% --------------------------------------------------
% PARAMETERS
\multicolumn{2}{@{}l}{\textbf{Parameters}} \\[2pt]

\[
w(m)\in\mathbb{R}^d.
\]
&
For each slice $i\in\{1,\dots,T-1\}$:
\[
u_i(m)\in\mathbb{R}^d,
\quad
v_i(m)\in\mathbb{R},
\]
often grouped as $w_i(m)
\coloneqq
\begin{bmatrix}
u_i(m) \\
v_i(m)
\end{bmatrix}
\in\mathbb{R}^{d+1}.$
\\[6pt]

\midrule

% --------------------------------------------------
% LOSS
\multicolumn{2}{@{}l}{\textbf{Training loss}} \\[2pt]

Squared loss against time-varying target $Y(m)$:
\[
\mathcal{L}_{\mathrm{mono}}(m,w)
=
\E\big[(w^\top \bx -  y(m))^2\big].
\]
&
Slice-wise quadratic loss:
\[
\mathcal{L}_i(m, w_i)
=
\E\Big[
\big(
w_i^\top \tilde{\bx}_i(m)-(y(m)- \bz)
\big)^2
\Big],
\]
\[
\tilde{\bx}_i(m)
\coloneqq
\begin{bmatrix}
\bx \\
S_i(m)
\end{bmatrix},
~~
S_i(m)
=
\alpha_i \bz+(1-\alpha_i)y(m),
~~
\alpha_i
=
\frac{T-i}{T-1}.
\]
\\ \\

\midrule

% --------------------------------------------------
% GRADIENT FLOW
\multicolumn{2}{@{}l}{\textbf{Gradient flow}} \\[2pt]

Parameter dynamics, with
\[
\Sigma\coloneqq \E[\bx \bx^\top],
~~
b(m)\coloneqq \E[\bx\,y(m)]:
\]
\[
\dot w(m)
=
-2\big(\Sigma w(m)-b(m)\big),
\]
\[
\frac{\partial}{\partial m} f_{\mathrm{mono}}(\bx;m)
=
\dot w(m)^\top \bx.
\]
&
Per-slice parameter dynamics:
\[
\dot w_i(m)
=
-2\big(A_i(m)w_i(m)-b_i(m)\big).
\]

At the predictor level, define
\[
w_{\mathrm{eff}}(m)
\coloneqq
\sum_{\ell=1}^{T-1}\beta_\ell(m)\,u_\ell(m).
\]

Then the exact decomposition is
\[
\dot w_{\mathrm{eff}}(m)
=
\underbrace{
\sum_{\ell=1}^{T-1}\beta_\ell(m)\,\dot u_\ell(m)
}_{\text{feature learning}}
+
\underbrace{
\sum_{\ell=1}^{T-2}\dot\beta_\ell(m)\,u_\ell(m)
}_{\text{feature reweighting}},
\]
with
$\dot\beta_\ell(m)
=
\beta_\ell(m)
\sum_{k=\ell+1}^{T-1}
\frac{h\,\dot v_k(m)}
     {1+h\,v_k(m)}.$
\\
\bottomrule
\end{tabularx}

\caption{\footnotesize{
\textbf{Summary of differences between monolithic and flow-matching predictors in the linear gradient-flow model.}
Flow matching induces an additional \emph{feature reweighting channel}
(via $\dot\beta_\ell$, driven by gain dynamics $\dot v_k$)
that can move the predictor even when $\dot u_\ell=0$,
whereas monolithic predictors require direct motion of $w(m)$
to change $f_{\mathrm{mono}}(\cdot;m)$.
}}
\label{tab:mono_vs_flow_linear_appendixE}
\end{table}

\vspace{-0.2cm}
\subsection{Linear Euler Flow Matching Model}
\vspace{-0.2cm}

Fix $T \ge 3$ and step size $h \coloneqq \frac{1}{T-1}$. Consider the linear Euler recursion (integration index $t$):
\begin{align}
s_{i+1}(\bx,\bz;m)
&=
\bigl(1 + h\, v_i(m)\bigr)\, s_i(\bx,\bz;m)
+ h\, u_i(m)^\top \bx,
\qquad i = 1,\dots,T-1,
\\
s_1(\bx,\bz;m)
&=
\bz,
\qquad
\hat y(\bx,\bz;m)
\coloneqq
s_T(\bx,\bz;m).
\end{align}

The parameters at learning step $m$ are
\begin{align*}
u_i(m) \in \mathbb{R}^d,
\qquad
v_i(m) \in \mathbb{R},
\qquad i=1,\dots,T-1.
\end{align*}
\textbf{Interpretation.}
Each $u_i(t)$ injects a linear feature direction.
Each $v_i(t)$ is a \emph{gain parameter} that rescales accumulated signal. Next we write down the closed form of the linear predictor

\subsection{Unrolled Predictor Representation}

\begin{tcolorbox}[
  colback=figurebackground,
  boxrule=0pt,
  left=6pt,
  right=6pt,
  top=6pt,
  bottom=6pt
]
\begin{lemma}[Closed-form predictor]
Define
\begin{align}
P_{>i}(m)
&:= \prod_{j=i+1}^{T-1}\bigl(1+h\,v_j(m)\bigr), 
\qquad
\beta_i(m) := h\,P_{>i}(m).\\
\text{Then}~~~~~\hat y(\bx,\bz;m)
&=
\Big(\prod_{j=1}^{T-1}\bigl(1+h\,v_j(m)\bigr)\Big)\bz
+
\sum_{i=1}^{T-1} \beta_i(m)\,u_i(m)^\top \bx.
\end{align}
If $\E[\bz]=0$, the mean predictor is given by a linear function of $\bx$:
\begin{align}
f_{\mathrm{FM}}(\bx;m)
=
\sum_{i=1}^{T-1} \beta_i(m)\,u_i(m)^\top \bx.
\end{align}
\end{lemma}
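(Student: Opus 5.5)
The plan is to prove the closed form by unrolling the affine recursion $s_{i+1} = a_i s_i + h\,u_i(m)^\top \bx$, where $a_i := 1 + h\,v_i(m)$. Since each step is affine in $s_i$ with coefficients independent of $\bz$ and $\bx$, induction on the integration index gives an explicit formula for every intermediate state. The specific inductive claim is that for all $k = 1,\dots,T$,
\begin{align*}
s_k(\bx,\bz;m) = \Big(\prod_{j=1}^{k-1} a_j\Big)\bz + \sum_{i=1}^{k-1} h \Big(\prod_{j=i+1}^{k-1} a_j\Big) u_i(m)^\top \bx,
\end{align*}
with the convention that empty products equal $1$ and empty sums equal $0$.

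The base case $k=1$ reduces to $s_1 = \bz$, which is the initial condition. For the inductive step, I substitute the hypothesis for $s_k$ into $s_{k+1} = a_k s_k + h\,u_k(m)^\top \bx$. The factor $a_k$ extends every product by exactly one factor: $\prod_{j=1}^{k-1} a_j \mapsto \prod_{j=1}^{k}a_j$ and $\prod_{j=i+1}^{k-1} a_j \mapsto \prod_{j=i+1}^{k} a_j$. The new term $h\,u_k^\top\bx$ is the $i=k$ summand, whose inner product is empty and so equals $1$. Setting $k=T$ then gives $\hat y = s_T$ with noise coefficient $\prod_{j=1}^{T-1}(1+h v_j)$ and feature coefficients $h\prod_{j=i+1}^{T-1}(1+hv_j) = h P_{>i}(m) = \beta_i(m)$, which matches the statement. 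One index point needs care: for $i = T-1$ the product $P_{>T-1}$ is empty, so $\beta_{T-1} = h$.

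For the mean predictor, I take the expectation over $\bz$ with $\bx$ and the parameters held fixed. The coefficient of $\bz$ is deterministic, so linearity of expectation together with $\E[\bz]=0$ removes that term. What remains, $\sum_i \beta_i(m) u_i(m)^\top \bx$, is linear in $\bx$, and this is $f_{\mathrm{FM}}$.

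There is no real obstacle here. The only place errors can slip in is the bookkeeping of product indices and the empty-product convention at the endpoints $i = T-1$ and $k=1$, so I will state those conventions explicitly before starting the induction.
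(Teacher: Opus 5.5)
Your proposal is correct and follows the same route as the paper: unroll the Euler recursion so that each injection $h\,u_i(m)^\top \bx$ is multiplied by all downstream gains $\prod_{j>i}(1+h\,v_j(m))$, then drop the $\bz$-term using $\E[\bz]=0$. Your explicit induction on $s_k$, with the empty-product convention stated, just makes rigorous the paper's one-line ``iterating forward'' step.
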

\end{tcolorbox}

\begin{proof}
Unroll the Euler recursion:
\begin{align*}
s_{i+1}
=
\bigl(1+h\,v_i(m)\bigr)s_i
+
h\,u_i(m)^\top \bx.
\end{align*}

Iterating forward yields
\begin{align*}
s_T
=
\Big(\prod_{j=1}^{T-1}\bigl(1+h\,v_j(m)\bigr)\Big)\bz
+
\sum_{i=1}^{T-1}
\left(
h \prod_{j=i+1}^{T-1}\bigl(1+h\,v_j(m)\bigr)
\right)
u_i(m)^\top \bx,
\end{align*}
since each injection $u_i(m)^\top \bx$ is multiplied by all downstream gains.

By definition,
\(
\beta_i(m) = h\,P_{>i}(m),
\)
which gives the stated formula.

If $\E[\bz]=0$, the $\bz$-term vanishes in expectation, yielding
\begin{align*}
f_{\mathrm{FM}}(\bx;m)
=
\sum_{i=1}^{T-1}\beta_i(m)\,u_i(m)^\top \bx.
\end{align*}
\end{proof}
\emph{\textbf{Remark.}} If $\E[\bz] \neq 0$, then the term corresponding to $\bz$ appears as a \emph{constant} bias across all state-action pairs and independent of the feature weights $u_i(m)$.

\vspace{-0.2cm}
\subsection{Slice-wise Gradient Flow}
\vspace{-0.2cm}

Let $(\bx, y(m))$ be a time-indexed data process with finite second moments.
Assume $\E[\bz]=0$ and $\Var(\bz)=\sigma_z^2$. Each slice of integration is trained via a local squared error loss:
\begin{align}
\mathcal L_i(m,w_i)
&=
\E\Big[(w_i(m)^\top \tilde \bx_i(m) - (y(m)-\bz))^2\Big],
\qquad
w_i(m)
:=
\begin{bmatrix}
u_i(m) \\
v_i(m)
\end{bmatrix},\\
\text{where}~~~\tilde \bx_i(m)
:=&
\begin{bmatrix}
\bx \\
S_i(m)
\end{bmatrix},
\qquad
S_i(m)
:=
\alpha_i \bz + (1-\alpha_i) y(m), 
\qquad
\alpha_i
:=
\frac{T-i}{T-1}.
\end{align}
Also define the slice-wise second-moment matrix and cross-moment vector:
\begin{align}
A_i(m)
:=
\E\big[\tilde \bx_i(m)\tilde \bx_i(m)^\top\big],~~~~~b_i(m) :=
\E\big[\tilde \bx_i(m)\,(y(m)-\bz)\big].
\end{align}
Under gradient flow dynamics, the parameter dynamics satisfy the following relation at learning step $m$:
\begin{align*}
\dot w_i(m)
=
-2~\big(A_i(m) \cdot w_i(m) - b_i(m)~\big).
\end{align*}

\vspace{-0.2cm}
\subsection{Decomposition of Predictor Gradient Flow Dynamics}
\vspace{-0.2cm}
Define the effective weight vector as: $w_{\mathrm{eff}}(t) =
\sum_{i=1}^{T-1}\beta_i(t) u_i(t).$ Then, the following holds:

\begin{tcolorbox}[
  colback=figurebackground,
  boxrule=0pt,
  left=6pt,
  right=6pt,
  top=6pt,
  bottom=6pt
]
\begin{lemma}[Gradient flow for effective weight vector]
\label{thm:predictor_decomposition}
The time derivative of the predictor satisfies
\begin{align*}
\dot w_{\mathrm{eff}}(t)
=
\underbrace{\sum_{i=1}^{T-1}\beta_i(t)\,\dot u_i(t)}_{\textbf{Feature learning}}
+
\underbrace{\sum_{i=1}^{T-2}\dot\beta_i(t)\,u_i(t)}_{\textbf{Feature reweighting}},
~~~~~~~~~\text{where}~~~~\dot\beta_i(t)
=
\beta_i(t)
\sum_{k=i+1}^{T-1}
\frac{h\,\dot v_k(t)}{1+h v_k(t)}.
\end{align*}
\end{lemma}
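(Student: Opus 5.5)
The plan is a direct product-rule computation on the closed-form effective weight. By the closed-form predictor lemma, $w_{\mathrm{eff}}(t)=\sum_{i=1}^{T-1}\beta_i(t)\,u_i(t)$ with $\beta_i(t)=h\,P_{>i}(t)$ and $P_{>i}(t)=\prod_{j=i+1}^{T-1}(1+h\,v_j(t))$. The parameters $u_i(t)$ and $v_j(t)$ are differentiable in the training time, since they solve the linear ODE $\dot w_i=-2(A_i w_i-b_i)$. I will assume $y(m)$ is regular enough that $A_i$ and $b_i$ are continuous, so that the ODE has $C^1$ solutions.

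Differentiating term by term gives
\[
\dot w_{\mathrm{eff}}=\sum_{i=1}^{T-1}\beta_i\,\dot u_i+\sum_{i=1}^{T-1}\dot\beta_i\,u_i .
\]
The first sum is the feature-learning term as stated. For the second sum, I handle the range of the index separately. For $i=T-1$ the product $P_{>T-1}$ is empty, so $\beta_{T-1}\equiv h$ and $\dot\beta_{T-1}=0$. Dropping this term truncates the reweighting sum at $T-2$.

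For $i\le T-2$, I differentiate the product with the Leibniz rule:
\[
\dot P_{>i}=\sum_{k=i+1}^{T-1}h\,\dot v_k\prod_{j\ne k,\;j=i+1}^{T-1}(1+h v_j).
\]
To get the stated logarithmic-derivative form, I need $1+h v_k(t)\neq 0$. Under that condition, each inner product equals $P_{>i}/(1+h v_k)$, which yields
\[
\dot\beta_i=\beta_i\sum_{k=i+1}^{T-1}\frac{h\,\dot v_k}{1+h v_k}.
\]

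The only genuine subtlety is this division by $1+h v_k$, and I will handle it explicitly. I will state the nondegeneracy condition $1+h v_k\ne0$, which is implicit in the lemma's formula. I will also remark that without it the Leibniz form above remains valid, so the decomposition into feature learning plus feature reweighting holds in any case.

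Everything else is routine bookkeeping. Combining the pieces gives the claimed decomposition of $\dot w_{\mathrm{eff}}$.
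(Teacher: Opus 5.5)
Your proposal is correct and follows the paper's proof: apply the product rule to $w_{\mathrm{eff}}=\sum_i \beta_i u_i$, then take the logarithmic derivative of $\beta_i = h\prod_{k>i}(1+h v_k)$. Two details you make explicit are left implicit in the paper: that $\dot\beta_{T-1}=0$ (an empty product), which is why the reweighting sum stops at $T-2$, and that the quotient form of $\dot\beta_i$ requires $1+h v_k\neq 0$, while the Leibniz form and the decomposition hold without that condition.
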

\end{tcolorbox}
\begin{proof}
Differentiating the expression for $w_\mathrm{eff}(t)$ using the product rule, we get:
\begin{align*}
\dot w_{\mathrm{eff}}
&=
\sum_{i=1}^{T-1}
\big(
\dot\beta_i u_i
+
\beta_i \dot u_i
\big).
\end{align*}
Rearranging yields the stated decomposition. It remains to compute $\dot\beta_i$.
Recall
\begin{align*}
\beta_i
=
h \prod_{k=i+1}^{T-1}(1+h v_k).
\end{align*}
Taking logarithmic derivatives,
\begin{align*}
\frac{\dot\beta_i}{\beta_i}
&=
\sum_{k=i+1}^{T-1}
\frac{h \dot v_k}{1+h v_k}.
\end{align*}
Multiplying both sides by $\beta_i$ gives
\begin{align*}
\dot\beta_i
=
\beta_i
\sum_{k=i+1}^{T-1}
\frac{h \dot v_k}{1+h v_k}.
\end{align*}
\end{proof}

\vspace{-0.3cm}
\subsection{Gradient Flow for the Gain Parameters $v_k(m)$ Under Target Non-Stationarity}
\vspace{-0.2cm}

Define moments:
\begin{align*}
\Sigma := \E[\bx\bx^\top],
\qquad
b(m):=\E[\bx\, y(m)],
\qquad
q(m):=\E[y(m)^2].
\end{align*}

\begin{tcolorbox}[
  colback=figurebackground,
  boxrule=0pt,
  left=6pt,
  right=6pt,
  top=6pt,
  bottom=6pt
]
\begin{lemma}[Gradient flow for gain $v_k(m)$]
\label{thm:gain_dynamics}
For each slice $k$ of integration,
\begin{align*}
\dot v_k(m)
=
-2\Big(
(1-\alpha_k)\,b(m)^\top u_k(m)
+
\big((1-\alpha_k)^2 q(m)+\alpha_k^2\sigma_z^2\big)v_k(m)
-
(1-\alpha_k)q(m)
+
\alpha_k\sigma_z^2
\Big).
\end{align*}
\end{lemma}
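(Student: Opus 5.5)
The plan is to read off the last coordinate of the slice-wise gradient flow $\dot w_k(m) = -2\big(A_k(m) w_k(m) - b_k(m)\big)$ from the previous subsection. Since $w_k = [u_k; v_k]$, we have $\dot v_k(m) = e_{d+1}^\top \dot w_k(m)$. Hence
\begin{align*}
\dot v_k(m) = -2\Big( \E[S_k(m)\,\bx]^\top u_k(m) + \E[S_k(m)^2]\, v_k(m) - \E\big[S_k(m)\,(y(m)-\bz)\big] \Big).
\end{align*}
This follows because the last row of $A_k(m) = \E[\tilde\bx_k \tilde\bx_k^\top]$ is $\big(\E[S_k \bx^\top],\, \E[S_k^2]\big)$, and the last entry of $b_k(m)$ is $\E[S_k (y(m)-\bz)]$. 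The proof then reduces to evaluating three moments at a fixed learning step $m$, with $\alpha_k$ treated as a constant.

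For these computations I will use the standing assumptions $\E[\bz]=0$ and $\Var(\bz)=\sigma_z^2$. I will also make explicit the fact, implicit in the \texttt{floq} setup, that the initial noise $\bz$ is drawn independently of the data pair $(\bx, y(m))$. The key steps are as follows.

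First, for the cross-moment with features, substituting $S_k = \alpha_k \bz + (1-\alpha_k) y(m)$ gives $\E[S_k \bx] = \alpha_k \E[\bz]\E[\bx] + (1-\alpha_k)\E[\bx\, y(m)] = (1-\alpha_k) b(m)$.

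Second, for the interpolant second moment, expanding the square gives $\E[S_k^2] = \alpha_k^2 \sigma_z^2 + 2\alpha_k(1-\alpha_k)\E[\bz]\E[y(m)] + (1-\alpha_k)^2 q(m)$. The cross term vanishes by independence and $\E[\bz]=0$.

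Third, for the target cross-moment, $\E[S_k (y(m)-\bz)] = (1-\alpha_k) q(m) - \alpha_k \sigma_z^2$, again because the mixed terms $\E[\bz\, y(m)]$ vanish.

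Substituting these three expressions into the displayed formula yields exactly the stated expression. Finite second moments of $(\bx, y(m))$ guarantee that every expectation is well defined.

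The only delicate point is justifying that the cross terms $\E[\bz\bx]$ and $\E[\bz\,y(m)]$ vanish. This needs independence of $\bz$ from the data and from the (target-network-generated) label $y(m)$, not just $\E[\bz]=0$. I will state this independence explicitly as part of the setup, since in \texttt{floq} the training noise is sampled fresh and separately from the noise used to form $y(m)$. Everything else is routine bookkeeping.
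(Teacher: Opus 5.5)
Your proposal is correct and is essentially the paper's argument. The paper differentiates $\mathcal L_k$ directly in $v_k$ to get $2\E\big[(u_k^\top\bx + v_k S_k - (y(m)-\bz))S_k\big]$, which is exactly the last coordinate of $2(A_k w_k - b_k)$ that you read off, and then evaluates the same three moments. Your explicit independence assumption is a small improvement, since the paper uses $\E[\bz\,\bx]=0$ in computing $\E[(u_k^\top\bx)S_k]$ without stating it. Your normalization $\dot w_k = -\nabla_{w_k}\mathcal L_k$ also gives the stated factor $-2$ directly, whereas the paper's phrase ``$\dot v_k = -2\,\partial_{v_k}\mathcal L_k$'' would double it.
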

\end{tcolorbox}
\begin{proof}
The loss for a given slice is given by:
\begin{align*}
\mathcal L_k(m)
=
\E\big[(u_k(m)^\top \bx + v_k(m)\, S_k(m) - (y(m)-\bz))^2\big].
\end{align*}

Differentiating w.r.t.\ $v_k(m)$,
\begin{align*}
\partial_{v_k}\mathcal L_k(m)
&=
2\E\big[
(u_k(m)^\top \bx + v_k(m)\, S_k(m) - (y(m)-\bz))\, S_k(m)
\big].
\end{align*}

Using
\(
S_k(m) = \alpha_k \bz + (1-\alpha_k)y(m),
\)
and $\E[\bz]=0$, $\E[\bz\,y(m)]=0$, $\Var(\bz)=\sigma_z^2$,
direct expansion yields
\begin{align*}
\E[S_k(m)^2]
&=
(1-\alpha_k)^2 q(m) + \alpha_k^2\sigma_z^2,
\\
\E[(y(m)-\bz)\,S_k(m)]
&=
(1-\alpha_k) q(m) - \alpha_k \sigma_z^2,
\\
\E[(u_k(m)^\top \bx)\, S_k(m)]
&=
(1-\alpha_k)\, b(m)^\top u_k(m).
\end{align*}

Substituting and applying gradient flow
\(
\dot v_k(m) = -2\,\partial_{v_k}\mathcal L_k(m)
\)
gives the stated expression.
\end{proof}

\vspace{-0.1cm}
\subsection{Feature Plasticity Under Moving Targets}
\vspace{-0.1cm}
Next, we put these intermediate results into a theorem that shows how flow-matching critics preserve feature plasticity compared to monolithic critics.

\begin{tcolorbox}[
  colback=figurebackground,
  boxrule=0pt,
  left=6pt,
  right=6pt,
  top=6pt,
  bottom=6pt
]
\begin{theorem}[Feature reweighting in flow-matching networks.]
\label{thm:feature_reweighting_clean}
Fix an interval $[m_0,m_1]$ and suppose $\dot u_\ell(m)=0 \quad \forall \ell,\ m\in[m_0,m_1].$ Then the following conditions hold:
\begin{enumerate}[itemsep=5pt]
\item \textbf{\textit{(Flow matching.)}} For the flow-matching predictor, $\dot w_{\mathrm{eff}}(m) =
\sum_{\ell=1}^{T-2}\dot\beta_\ell(m)\,u_\ell$,  so the predictor can evolve entirely via the dynamics of the gain parameter $\{\dot v_k(m)\}$ (Lemma~\ref{thm:gain_dynamics}).

\item \textbf{\textit{(Monolithic predictor.)}} For $f_{\mathrm{mono}}(\bx;m)=w(m)^\top \bx$ trained by a squared loss,
\begin{align*}
\dot w(m)
=
-2(\Sigma w(m)-b(m)).
\end{align*}
Thus changing any predictions to chase a new target requires $\dot w(m)\neq 0$. When $w(m)$ is parameterized by a deep linear or ResNet architecture, this requires alternating \emph{at least} one $u_i(m)$.
\end{enumerate}
Hence flow matching can admit predictor adaptation entirely via \emph{reweighting existing features}, whereas monolithic networks requires modifying at least some feature vector.
\end{theorem}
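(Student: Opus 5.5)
The plan is to treat the two parts separately, since both follow almost directly from results already established. For the flow-matching part, I will start from Lemma~\ref{thm:predictor_decomposition}, which splits $\dot w_{\mathrm{eff}}(m)$ into a feature-learning term $\sum_{\ell}\beta_\ell(m)\dot u_\ell(m)$ and a feature-reweighting term $\sum_{\ell\le T-2}\dot\beta_\ell(m)u_\ell(m)$. On $[m_0,m_1]$ the hypothesis $\dot u_\ell\equiv 0$ kills the first term identically. This leaves $\dot w_{\mathrm{eff}}(m)=\sum_{\ell=1}^{T-2}\dot\beta_\ell(m)u_\ell$, with the $u_\ell$ constant. The sum stops at $T-2$ because $\beta_{T-1}=h$ has an empty product and hence zero derivative.

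To make precise the claim that the predictor ``can evolve entirely via the gains'', I will substitute the log-derivative formula $\dot\beta_\ell=\beta_\ell\sum_{k>\ell}h\dot v_k/(1+hv_k)$. This shows $\dot w_{\mathrm{eff}}$ is a linear combination of the fixed $u_\ell$, with coefficients driven only by $\{\dot v_k\}$. I will then invoke Lemma~\ref{thm:gain_dynamics} to argue that these gain velocities are generically nonzero under a moving target. The right-hand side of Lemma~\ref{thm:gain_dynamics} depends on $q(m)$ and $b(m)$, so whenever the target moves it is not identically zero unless one is at an exceptional stationary point. Hence $\dot v_k\neq 0$ gives $\dot\beta_\ell\neq0$ for $\ell<k$, provided $1+hv_j\neq 0$ for every $j$ so that $\beta_\ell\ne0$.

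For the monolithic part, I will write the gradient flow of the squared loss, $\dot w=-2(\Sigma w-b(m))$, and note that $\tfrac{d}{dm}f_{\mathrm{mono}}(\bx;m)=\dot w(m)^\top\bx$. Any change in predictions therefore requires $\dot w(m)\neq0$. For the deep linear parameterization $w=\prod_t u_t$, and for the ResNet case $w=\prod_t(I+u_t)$, the product rule gives
$\dot w=\sum_t(\cdots)\dot u_t(\cdots)$, which is the sum over factors of the product with the $t$-th factor replaced by $\dot u_t$. If every $\dot u_t=0$, each summand vanishes, so $\dot w=0$. Taking the contrapositive gives the existence of some $t$ with $\dot u_t\neq0$. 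This is exactly the contrast stated in Theorem~\ref{thm:feature_reweighting}.

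The main obstacle is interpretive rather than computational. The statement ``can adapt through gains'' requires the reweighting term to be genuinely nonzero. That needs nondegeneracy: $\beta_\ell\neq0$, at least one $\dot v_k\neq0$ from Lemma~\ref{thm:gain_dynamics}, and the $u_\ell$ not spanning a direction that cancels the combination. I will state these as generic conditions and check them on a simple instance, rather than try to prove nonvanishing universally. A further subtlety is that with all $\dot u_\ell=0$ imposed, the gradient flow of $u$ is constrained, so I treat the hypothesis as a restriction of the dynamics to the gain coordinates.
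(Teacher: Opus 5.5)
Your proposal is correct and follows the paper's proof: for the flow-matching part you drop the feature-learning term of Lemma~\ref{thm:predictor_decomposition} under $\dot u_\ell\equiv 0$ and attribute the remaining motion to the gains via Lemma~\ref{thm:gain_dynamics}, and for the monolithic part you combine the squared-loss gradient flow with the product structure of the deep linear/ResNet parameterization. Your nondegeneracy caveats are more careful than the paper's closing remark, which infers $\dot v_k\neq 0$ merely from $u_k,v_k\neq 0$; one refinement is that several nonzero $\dot v_k$ can cancel inside $\dot\beta_\ell$, so the clean statement uses the largest index $k^*\ge 2$ with $\dot v_{k^*}\neq 0$, which gives $\dot\beta_{k^*-1}=\beta_{k^*-1}\,h\,\dot v_{k^*}/(1+h v_{k^*})\neq 0$.
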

\end{tcolorbox}
\begin{proof}
If $\dot u_\ell(m)=0$ on $[m_0,m_1]$, then
Lemma~\ref{thm:predictor_decomposition} reduces to
\begin{align*}
\dot w_{\mathrm{eff}}(m)
=
\sum_{\ell=1}^{T-2}\dot\beta_\ell(m)\,u_\ell,
\end{align*}
which depends only on gain dynamics. On the other hand, for the monolithic predictor,
\begin{align*}
\mathcal L(w)
& =
\E[(w^\top \bx - y(m))^2].\\
\text{Differentiating this}~~~~~\dot w(m)
& =
-2(\Sigma w(m) - b(m)).
\end{align*}
Thus predictor motion requires $\dot w(m)\neq 0$, i.e.\ at least one feature vector itself must move since $w(m) = \prod_{t=1}^T u_t(m)$ for a deep linear network or $w(m) = \prod_{t=1}^T (I + u_t(m))$ for a deep residual network.

Further note that $\dot w_\text{eff}(m) \neq 0$ since $\frac{\dot v_k(m)}{v_k(m)} = f(u_k(m), v_k(m))$. When $u_k(m)$ and $v_k(m)$ take on non-zero values, $\dot v_k(m) \neq 0$. Therefore, the function prediction can change just via the process of integration.
\end{proof}

\vspace{-0.2cm}
\subsection{Why Fixed-Weight Ensembles Cannot Replicate Reweighting}
\label{sec:monolithic_linear_new}
\vspace{-0.1cm}

We also study whether feature ensembles of monolithic critics can attain a similar effect as flow matching. To study this, consider $K$ monolithic predictors
\begin{align*}
f^{(p)}(\bx;m)=w_p(m)^\top \bx,
\end{align*}
trained independently by gradient flow but from different initializations of $w_p(m)$. Next, assume that we combine them with fixed weights $\pi_p$ that satisfy $\sum_k \pi_k = 1$. We use this to define an ensemble predictor:
\begin{align*}
f_{\mathrm{ens}}(\bx;m)
=
\sum_{p=1}^P \pi_p f^{(p)}(\bx;m).
\end{align*}
Equipped with this definition, we now show that at least some feature vector within a network needs to change to make changes to the ensemble predictor. 

\begin{tcolorbox}[
  colback=figurebackground,
  boxrule=0pt,
  left=6pt,
  right=6pt,
  top=6pt,
  bottom=6pt
]
\begin{lemma}[Ensemble collapse]
\label{thm:ensemble_collapse}
Let $\bar w(m)
:=
\sum_{k=1}^K \pi_k w_k(m).$
Then
\begin{align*}
f_{\mathrm{ens}}(\bx;m)
&= \bar w(m)^\top \bx,
~~~~~~~\text{and}~~~~~~
\dot{\bar w}(m) =
-2(\Sigma \bar w(m)-b(m)).
\end{align*}

In particular, if $\dot w_k(m)=0$ for all $k$ on an interval,
then $f_{\mathrm{ens}}$ is constant.
\end{lemma}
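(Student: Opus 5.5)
The argument is a direct computation that uses only linearity of the ensemble and the fact that the combination weights $\pi_k$ are fixed. We establish the two displayed identities in turn and then read off the final claim.

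\textbf{Step 1 (representation).} Expand the definition:
\[
f_{\mathrm{ens}}(\bx;m)=\sum_k \pi_k\, w_k(m)^\top \bx=\Big(\sum_k \pi_k w_k(m)\Big)^\top \bx=\bar w(m)^\top\bx .
\]
This is immediate by linearity of the inner product in its first argument.

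\textbf{Step 2 (dynamics).} Each member is trained independently by gradient flow on the same squared loss $\E[(w^\top\bx-y(m))^2]$. Exactly as in the monolithic case of Theorem~\ref{thm:feature_reweighting_clean}, this gives
\[
\dot w_k(m)=-2\big(\Sigma w_k(m)-b(m)\big).
\]
Since the $\pi_k$ do not depend on $m$, differentiating $\bar w$ gives $\dot{\bar w}=\sum_k\pi_k\dot w_k$. Substituting the member dynamics yields
\[
\dot{\bar w}(m)=-2\Big(\Sigma\sum_k\pi_k w_k(m)-\Big(\sum_k\pi_k\Big) b(m)\Big).
\]
The normalization $\sum_k\pi_k=1$ then collapses this to $-2(\Sigma\bar w(m)-b(m))$. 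So the averaged weight obeys exactly the dynamics of a single monolithic critic, and the ensemble adds no extra reweighting channel.

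\textbf{Step 3 (frozen members).} Suppose $\dot w_k(m)=0$ for all $k$ on an interval. Then $\dot{\bar w}=\sum_k\pi_k\dot w_k=0$ there. Consequently $\tfrac{d}{dm}f_{\mathrm{ens}}(\bx;m)=\dot{\bar w}(m)^\top\bx=0$ for every $\bx$, so $f_{\mathrm{ens}}$ is constant on that interval. This is the contrast with the flow-matching predictor, whose $w_{\mathrm{eff}}$ can still move through $\dot\beta_\ell$ when the features $u_\ell$ are frozen (Lemma~\ref{thm:predictor_decomposition}).

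\textbf{Potential gap.} The only place care is needed is the constancy of $\pi_k$, which must hold in $m$ and be independent of the data. If the weights were learned, the extra term $\sum_k\dot\pi_k w_k$ would reappear and act like a reweighting channel. The hypothesis of fixed weights excludes this, so I expect no real obstacle. I would also note the index mismatch $K$ versus $P$ in the definitions and treat them as the same number of members.
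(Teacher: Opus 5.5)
Your proposal is correct and follows essentially the same route as the paper: linearity gives $f_{\mathrm{ens}}(\bx;m)=\bar w(m)^\top\bx$, then you differentiate $\bar w$ with fixed $\pi_k$ and sum the member gradient flows to get the averaged dynamics, and frozen members force $\dot{\bar w}=0$. You are slightly more explicit than the paper in using $\sum_k\pi_k=1$ to collapse the $b(m)$ term, which the paper's summation step uses only implicitly.
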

\end{tcolorbox}
\begin{proof}
By linearity,
\begin{align*}
f_{\mathrm{ens}}(\bx;m)
=
\sum_k \pi_k w_k(m)^\top \bx
=
\bar w(m)^\top \bx.
\end{align*}
Differentiating,
\begin{align*}
\dot{\bar w}(m)
=
\sum_k \pi_k \dot w_k(m).
\end{align*}

Each $w_k$ follows gradient flow:
\begin{align*}
\dot w_k(m)
=
-2(\Sigma w_k(m) - b(m)).
\end{align*}
Summing,
\begin{align*}
\dot{\bar w}(m)
&=
-2\sum_k \pi_k (\Sigma w_k(m) - b(m)) \\
&=
-2(\Sigma \bar w(m) - b(m)).
\end{align*}
If all $\dot w_k(m)=0$ on an interval,
then $\dot{\bar w}(m)=0$,
so the ensemble predictor is constant.
\end{proof}

\vspace{-0.1cm}
\subsection{Interpretation and Takeaways}
\vspace{-0.1cm}

Although both models represent linear predictors,
flow matching decomposes learning into two channels:
\begin{itemize}
\item \textbf{Feature learning:}
updates of $u_m(t)$.

\item \textbf{Feature reweighting:}
updates of downstream gains $v_k(t)$
which modify coefficients $\beta_m(t)$.
\end{itemize}
From a high-level perspective, under non-stationary targets, the gain dynamics (Lemma~\ref{thm:gain_dynamics}) responds directly to
moment drift in $b(t)$ and $q(t)$, allowing rapid reallocation of
importance among previously learned features to track the non-stationary target. This explains our intuition that changes in the TD target are absorbed by the integration process without changing the features themselves in the case of a flow-matching critic. Monolithic predictors and fixed-weight ensembles lack this sort of an adaptation procedure. They must overwrite feature parameters $u_k(m)$ to track target drift. Note that this advantage is a direct consequence of the dense supervision (Lemma~\ref{thm:predictor_decomposition}) and the use of integration at test time. This formalizes the plasticity advantage of flow matching.

\end{document}